\documentclass[journal,a4paper]{IEEEtran}

\usepackage{graphicx}
\usepackage{bm}
\usepackage{amsfonts}
\usepackage{amsmath}
\usepackage{amsthm}
\usepackage{amssymb}
\usepackage{bbm}
\usepackage{times}
\usepackage[export]{adjustbox}

\ifCLASSOPTIONcompsoc
\usepackage[subrefformat=parens,labelformat=parens,caption=false,font=normalsize,labelfont=sf,textfont=sf]{subfig}
\else
\usepackage[subrefformat=parens,labelformat=parens,caption=false,font=footnotesize]{subfig}
\fi

\usepackage{latexsym,bm,amsmath,amssymb} 
\usepackage{CJK}
\usepackage{hhline}
\usepackage[pagebackref=false]{hyperref}
\usepackage{siunitx}

\usepackage{multirow} 
\usepackage{xcolor}
\usepackage{epstopdf}
\usepackage{cite}
\usepackage[noend]{algpseudocode}
\usepackage{algorithmicx,algorithm}
\usepackage{comment}
\usepackage{bbm}

\usepackage{tabularx} %
\usepackage{enumitem} %

\newtheorem{theorem}{Theorem}
\newtheorem{lemma}{Lemma}
\DeclareMathOperator*{\argmax}{arg\,max}

\usepackage{mathtools}

\newcommand{\sinc}{\operatorname{sinc}}

\begin{document}
\title{Short-Length Code Designs for Integrated Sensing and Communications: A Deep Learning Approach}

\author{Muah Kim,~\IEEEmembership{Student Member,~IEEE,} Shuangyang Li,~\IEEEmembership{Member,~IEEE,}
Tayyebeh Jahani-Nezhad,~\IEEEmembership{Member,~IEEE,} 
Rafael F. Schaefer,~\IEEEmembership{Senior Member,~IEEE,} and
Giuseppe Caire,~\IEEEmembership{Fellow,~IEEE}

\thanks{
Part of the paper was presented at the \textit{IEEE International Conference on Communications}, Denver, Colorado, June. 2024 \cite{kim2024short}.\\
The work of T. Jahani-Nezhad was supported by the Gottfried Wilhelm Leibniz-Preis 2021 of the German Research Foundation (DFG).
The work of G. Caire and S. Li was supported by the German Federal Ministry of Education and Research Germany (BMBF) in the program of ``Souver{\"a}n. Digital. Vernetzt.'' Joint Project 6G-RIC (Project IDs 16KISK030). In addition, the work of S. Li was also supported in part by the European Union's Horizon 2020 Research and Innovation Program under MSCA Grant No. 101105732 – DDComRad. 
The work of R. F. Schaefer was supported in part by the BMBF through the Transfer Hub 6G-life under Grant 16KIS2413K and in part by the DFG as part of Germany's Excellence Strategy -- EXC 2050/2 -- Project ID 390696704 -- Cluster of Excellence \emph{``Centre for Tactile Internet with Human-in-the-Loop'' (CeTI)} of Technische Universit\"at Dresden. \\
M. Kim and R. F. Schaefer are with the Faculty of Electrical and Computer Engineering, Dresden University of Technology, Dresden 01069, Germany (e-mail:\{muah.kim, rafael.schaefer\}@tu-dresden.de).\\
S. Li, T. Jahani-Nezhad, and G. Caire are with the Faculty of Electrical Engineering and Computer Science, Technical University of Berlin, Berlin 10587, Germany (e-mail:\{shuangyang.li, t.jahani.nezhad, caire\}@tu-berlin.de).}}

\maketitle

\begin{abstract}
Integrated sensing and communication (ISAC) enables joint communication and sensing using a shared waveform, but its signal design is challenging due to the inherent trade-off between the two objectives, particularly in the short blocklength regime. This paper proposes an autoencoder (AE)-based framework for ISAC waveform design in noncoherent settings.

We derive a modified Cramér–Rao bound for multi-target delay estimation and analyze the maximum-likelihood decoding rule for noncoherent communication under correlated fading. These results reveal structural connections and trade-offs between communication and sensing objectives in waveform design.

Based on this analysis, the AE learns waveform representations that jointly optimize both functionalities, with a tunable parameter controlling the trade-off. Simulation results show that the proposed design outperforms conventional schemes in both communication reliability and sensing accuracy, especially under short blocklength and fading conditions.\end{abstract}

\begin{IEEEkeywords} 
Code design, correlated fading, deep learning, integrated sensing and communications, noncoherent detection.
\end{IEEEkeywords}

\IEEEpeerreviewmaketitle

\section{Introduction}
Wireless communications and sensing have both evolved significantly over the years, each with its own set of advancements in system architectures and algorithms. However, despite their similarities in signal processing techniques, channel modeling, and even certain aspects of system design, these two fields have largely operated in isolation from one another. Recently, there has been a noticeable shift in focus, primarily driven by the need to optimize spectrum usage, reduce costs, and save energy. This shift has led to co-design of an integrated sensing and communication (ISAC) system \cite{chiriyath2017radar,liu2020joint}. 

The introduction of ISAC systems opens a multitude of challenges in various applications, research avenues, and standardization initiatives. These include (1) exploring theoretical boundaries and trade-offs through information and signal processing theory~\cite{kobayashi2018joint,Yifeng2023fundamental}, (2) collaboratively designing waveforms and interference management strategies to balance effective communication and robust sensing performance~\cite{Fan2018toward, Shuangyang2022Novel, du2023probabilistic}, and (3) integrating both functionalities to enhance communication and sensing performances in practical applications~\cite{liu2020joint}.

Most of the ISAC signal designs can be categorized in three different approaches: communication-centric, sensing-centric, and joint designs~\cite{zhou2022integrated}. Both communication- and sensing-centric designs utilize existing waveforms from one task for the other, whereas the joint design aims to optimize the waveform for both functionalities.
In this paper, we consider the joint waveform design for ISAC.
The dual-function waveform design requires a delicate balance between communication and sensing capabilities. 

{\color{black}In particular, in order to convey information, the transmitter (TX) must make use of a collection (a codebook) of possible waveforms (codewords), corresponding to the different information messages. The main performance metrics are the rate (number of bits per channel use conveyed by the codebook), and the block error rate (BLER)~\cite{cover1999elements}. In contrast, sensing typically consists of estimating some parameter contained in the sensing channel (e.g., delay for ranging~\cite{kay1993fundamentals}), and it is best enabled by a predetermined waveform with suitable characteristics (e.g., very peaky autocorrelation with low sidelobes). 
deterministic signals~\cite{richards2005fundamentals}, e.g., constant envelope, and good correlation~\cite{chiriyath2019novel}, to ensure stable sensing.}
ISAC signal design considering this potential mismatch remains largely an open problem and appears as one of the challenges to solve to enable 6G technology~\cite{Yifeng2023fundamental}. 

Recently, deep learning (DL) has emerged as a powerful tool for wireless communication system design, particularly for problems that are difficult to address using classical optimization methods. In the context of ISAC signal design, DL enables flexible optimization over complex and high-dimensional design spaces~\cite{temiz2025deep}.

{\color{black}A growing body of work has explored DL-based ISAC waveform and transceiver design under various system and channel models. These include model-free and model-based online learning for cooperative ISAC~\cite{Pulkkinen2023ModelFree, Pulkkinen2024ModelBased}, beamforming design~\cite{bazzi2023outage, Xu2024BeamformingISAC}, joint waveform and receiver (RX) design for MIMO systems~\cite{Kang2023MIMOISAC}, symbol-level precoding~\cite{Zheng2024SLPISAC,jiang2025slp}, RIS-assisted ISAC systems~\cite{jiang2025joint}, end-to-end learning frameworks~\cite{MateosRamos2022E2EISAC}, and unsupervised learning approaches~\cite{Liu2023DistributedISAC, Temiz2025UnsupervisedISAC}.

Within this line of research, our work focuses on short-length noncoherent ISAC code design, which introduces distinct challenges that are not addressed in the existing studies.}

{\color{black}In this paper, we consider ISAC signal design with short codeword lengths, defined in the range of $[10, 200]$ symbols. In this regime, classical coding theory based on asymptotic block length analysis is not directly applicable~\cite{Yifeng2023fundamental}, as performance is strongly influenced by finite-block length effects.

In conventional communication systems, a code design specifies a mapping from information messages to transmitted codewords, typically optimized using algebraic or probabilistic constructions. In contrast, in the short-block length regime, such analytical design tools are limited, especially when additional sensing constraints are imposed.

Furthermore, pilot-based channel estimation is often inefficient in short-block length transmissions due to the associated overhead. As a result, noncoherent detection~\cite[Chapter 4.5]{proakis2008digital} is commonly adopted, which further complicates code design.

To address these challenges, we adopt a data-driven approach, where the codebook (i.e., the mapping from messages to transmitted signal sequences) is learned using a neural network (NN). The NN jointly optimizes the waveform for transmission with respect to both communication and sensing objectives and allows flexible balancing between them.}

\subsection{Literature Review}
\subsubsection{Signal Design for Noncoherent Communications} 
Noncoherent detection and decoding for Rayleigh fading channels have been studied under both block and fast fading conditions. In the block fading regime, early approaches combined differential modulation with convolutional or turbo codes, enabling low-complexity decoding without explicit channel estimation \cite{hoeher1999turbo}. Iterative schemes that combine outer codes with noncoherent demodulation, such as in \cite{chen2003joint}, later improved performance significantly, achieving results close to the noncoherent capacity. More recently, pilot-free polar-coded systems \cite{yuan2021polar} have used cyclic redundancy check (CRC)-aided decoding to implicitly estimate the channel, achieving near-coherent performance.

In the fast fading regime, orthogonal signaling schemes such as M-ary frequency shift keying (FSK) or on-off keying (OOK) have been shown to be effective, especially when combined with turbo or low-density parity-check (LDPC) codes, as in \cite{valenti2005iterative}. Beyond differential modulation, several works have explored amplitude-based and peaky signaling strategies, motivated by their theoretical optimality in the low-signal-to-noise ratio (SNR) noncoherent regime \cite{Verdu2002spectral}. These methods exploit energy detection and sparse signaling to remain robust under severe channel uncertainty.

\subsubsection{Signal Design for Sensing} 
Signal design for sensing has been extensively studied in the context of active sensing systems and parameter estimation.
For example, the use of constant modulus waveforms for sensing has been widely considered. Such waveforms can effectively avoid the potential non-linear signal distortion due to the imperfect power amplifier and improve the energy efficiency. These waveforms are often derived by solving the corresponding optimization
problems with non-convex constant modulus constraint (CMC)~\cite{Maio2009design,Guolong2014mimo}. 
The sensing waveform design based on the signal covariance has also appeared in the literature~\cite{Stoica2007probing}, where the formulated optimization problem can be solved by using semi-definite quadratic programming (SQP) algorithm in polynomial time. 
Furthermore, the sensing waveform design based on the ambiguity function has also been studied. Specifically, these waveforms can be optimized to minimize the weighted integrated sidelobe level~\cite{tianyu2025sensing,he2012waveform} or the peak sidelobe level~\cite{Jing2019designing}, where various optimization methods, such as cyclic algorithms and gradient descent algorithms, have shown good performance.

\subsubsection{Communication Codeword Design based on Neural Encoders} 
The codeword design using NNs is not a new topic, especially for improving the communication performance. DL has been widely used in channel coding and modulation design in physical layer communication. 
An autoencoder (AE) model is commonly used for optimizing coded modulation in communication. AE is originally a compression model that extracts features from the input data and recovers the data with little loss from the compressed features. This could be applied to communication by adding the channel between the encoder and the decoder part and adjusting the dimension of the encoded data. 
The framework for this AE-based end-to-end communication has been well established in~\cite{jiang2019turbo, ye2020deep, RL-Paper}.
This topic has been called \textit{channel AE} and has been further researched for diverse channel models~\cite{uhlemann2020deep, xue2021end, zhang2022svd} and for low complexity implementations~\cite{jamali2022productae, gunlu2023concatenated, fritschek2025mingru}. 

\subsection{Contributions}
This paper investigates joint waveform design for ISAC with short block lengths through heuristic analysis and DL-based optimization. The main contributions are as follows:

1) We provide a concise ISAC problem formulation considering noncoherent detection for communications and multiple target estimation for sensing. Based on the formulation, the optimal communication decoding principle and the modified Cramér-Rao lower bound (MCRB) characterizing the optimal estimation performance are derived analytically.

2) Based on the derived communication and sensing analysis, we provide heuristic understandings for the optimal code design, focusing on representative limiting cases. Particularly, we reveal that the optimal communication codewords use signal amplitude carrying information, where codeword symbols should have different power distributed in order to improve the detection performance. Furthermore, we show that the optimal codeword for sensing also requires unevenly distributed power among codeword symbols, where the value of the MCRB does not change by {\color{black}a common phase rotation} of the codeword. According to these understandings, we unveil the structural similarities and potential trade-offs between the considered communication and sensing objectives. 

3) We develop a flexible framework for the considered ISAC waveform design based on the derived performance metrics, where a tuneable trade-off parameter is introduced to balance the communications and sensing performance. Our results demonstrate that the proposed method outperforms conventional coding and modulation schemes under short block lengths. More importantly, our results show a good alignment with our heuristic understandings based on the theoretical analysis.

\subsection{Notations} 
The superscript $(\cdot)^{\rm{H}}$ denotes the Hermitian of a matrix; $x^{*}$ denote the conjugate of $x$; ${\bf I}_N$ represents the identity matrix of size $N \times N$; $\mathbb{C}$ denotes the field of complex numbers; $\Pr(\cdot) $ denotes the probability of an event; ${\rm diag}(\bf x)$ outputs the diagonal matrix constructed by elements in $\bf x$; $||{\bf X}||$ denotes the $l_2$-norm of $\bf X$;
$\mathbb{E}[\cdot]$ denotes the statistical expectation; 
$\mathcal{CN}\left(\bm{\mu},\bm{\Sigma}^2 \right)$ denotes a circularly symmetric complex multivariate Gaussian distribution with mean $\bm{\mu}$ and covariance $\bm{\Sigma}^2$, respectively; $\nabla_{\bm\theta}$ is the gradient operator with respect to vector ${\bm \theta}$; Element-wise multiplication of matrices is denoted by $\odot$.

\begin{figure}[t!]
\centering
\includegraphics[width=\columnwidth]{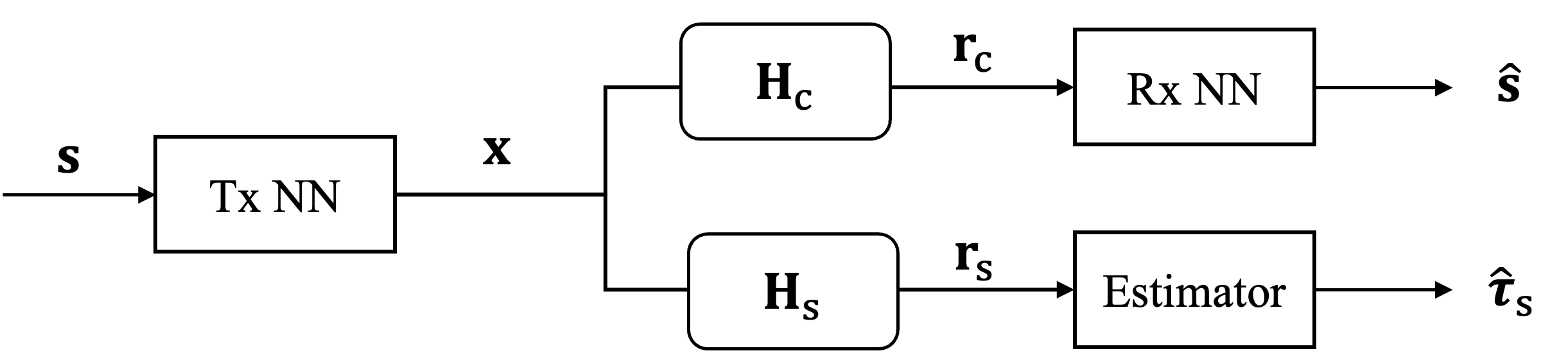} \caption{A diagram for the considered ISAC system based on the NN encoder.}
\label{System_model}
\centering
\end{figure}

\section{System Model}

We consider the ISAC transmission with a short block length, where the communication channel and sensing channel matrices are mutually independent and are denoted by ${\bf H}_{\rm c}\in \mathbb{C}^{N\times N}$ and ${\bf H}_{\rm s}\in \mathbb{C}^{N\times N}$, respectively.
Particularly, we focus on the noncoherent detection at the communication RX.
This noncoherent communication detector aligns with the nature of short block transmissions, operating without explicit knowledge of the communication channel coefficients to reduce the overhead of pilot signals for channel estimation. 

According to Fig.~\ref{System_model}, a $K$-bit information sequence $\bf s$ is first encoded using the proposed NN, resulting in a length-$N$ codeword of complex symbols $\mathbf{x}\in\mathbb{C}^{N\times 1}$ with code rate $K/N$ for transmission. After passing through the communication and sensing channels, the received symbol vectors for communication and sensing are given by ${\bf r}_{\rm c}$ and ${\bf r}_{\rm s}$, respectively, where 
\begin{align}
{{\bf{r}}_{\rm{c}}} = {{\bf{H}}_{\rm{c}}}{\bf{x}}+ {{\bf{w}}_{\rm{c}}}, \quad\text{and}\quad
{{\bf{r}}_{\rm{s}}} = {{\bf{H}}_{\rm{s}}}{\bf{x}}+ {{\bf{w}}_{\rm{s}}}.\label{eq:io_relation}
\end{align}
The additive noises ${\bf w}_{\rm c}$ and ${\bf w}_{\rm s}$ are defined as
${\bf w}_{\rm c} \sim \mathcal{CN}(0, N_{0,{\rm c}}{\bf I}_{N})$ and ${\bf w}_{\rm s} \sim \mathcal{CN}(0, N_{0,{\rm s}}{\bf I}_{N})$.
Both noise vectors ${{\bf{w}}_{\rm{c}}}$ and ${{\bf{w}}_{\rm{s}}}$ are statistically independent from ${{\bf{H}}_{\rm{c}}}{\bf{x}}$ and ${{\bf{H}}_{\rm{s}}}{\bf{x}}$, respectively.
The RX for communication attempts to recover the data sequence $\bf s$ based on ${{\bf{r}}_{\rm{c}}}$, and the resultant estimated bit vector is denoted by $\hat {\bf s}$. 
Meanwhile, the RX for sensing attempts to estimate the delays of $P$ paths based on ${{\bf{r}}_{\rm{s}}}$, and the resultant estimated delay vector is denoted by $\hat{\bm 
\tau}=(\tau^{(1)}, \tau^{(2)}, \dots, \tau^{(P)})$. 
It should be noted that the considered block diagram accounts for general ISAC transmissions with mono-static transceivers.
{\color{black}In this work, we focus on delay (ranging) estimation and do not explicitly model Doppler effects, which is a common assumption when targets are static or slowly varying over the observation interval. This simplification allows us to isolate the impact of waveform design on delay estimation and derive tractable performance metrics. Extending the framework to joint delay--Doppler estimation would require augmenting the parameter set and re-deriving the corresponding Fisher information matrix (FIM), which is left for future work.}

Consider $f_{\theta}\left( {\cdot} \right)$ and $g_{\theta}\left( {\cdot} \right)$ as an encoder and a decoder, respectively. The encoder performs a one-to-one mapping of the information bit sequence ${\bf s}$ to the power-normalized codeword $\bf x$. The decoder, on the other hand, takes the channel output and gives the estimated bit sequence. 
As such, our design objective is to find  effective $f_{\theta}\left( {\cdot} \right)$ and $g_{\theta}\left( {\cdot} \right)$ functions using NN-based methods to achieve good communication and  sensing performance. Here the subscript $\theta$ indicates that the function is parametrized and optimized using an NN.

\subsection{Communication Model}
We consider a frequency-flat Rayleigh fading communication channel {\color{black}for single-carrier communication}.
Specifically, we assume that ${{\bf{H}}_{\rm{c}}}$ is a diagonal matrix, i.e., ${{\bf{H}}_{\rm{c}}} = {\rm{diag}}\left( {{{\bf{h}}_{\rm{c}}}} \right)$, where ${{{\bf{h}}_{\rm{c}}}}$ is a length-$N$ vector composed of the communication channel coefficients at different time instants.
We further assume that ${{{\bf{h}}_{\rm{c}}}}\sim\mathcal{CN}(0, {\bf{R}}_c)$, where the elements are mutually correlated, characterized by covariance ${\bf{R}}_c$.

We apply the Jake and Clark's (JC's) time varying channel model for characterizing the correlated channel. In the JC model, the autocorrelation function is defined by the modified Bessel function of the first kind of order 0
$J_0(\cdot)$, such as
\begin{equation}
R_h(\ell) = J_0\left( 2\pi f_{\rm D} (\ell T_{\rm s}) \right), \label{eq_ACF}
\end{equation}
where $f_{\rm D}=(\nu/c)f_0$ is the maximum Doppler frequency, $\nu$ is the relative speed of the RX, $c$ is the speed of light, $f_0$ is the carrier frequency, $\ell T_{\rm s}$ is the channel symbol duration corresponding to  $\ell$ time instants, and 
\begin{equation} 
J_0(x)=\sum_{m=0}^\infty \frac{(-1)^m}{m!m!}\left(\frac{x}{2}\right)^{2m}=\frac{1}{\pi}\int_{0}^{\pi}\cosh(x\cos\theta)d\theta.
\end{equation}
For a given $f_{\rm D}$, the covariance ${{\bf{R}}_{\rm c}}$ of ${{{\bf{h}}_{\rm{c}}}}$ is defined as 
\begin{align}
{{\bf{R}}_{\rm c}} &= {\mathbb E}\left[ {{{\bf{h}}_{\rm{c}}}{\bf{h}}_{\rm{c}}^{\rm{H}}} \right]  \notag\\
&= \left[ {\begin{array}{*{20}{c}}
1 & R_h(1) & \cdots & R_h(N-1)\\
R_h(1) & 1 & \ddots & \vdots \\
 \vdots & \ddots & \ddots & R_h(1)\\
R_h(N-1) & \cdots & R_h(1) & 1
\end{array}} \right].
\end{align}

We highlight that the time-varying nature of the communication channel can be characterized by the above consideration, and as a matter of fact, both slow fading (coherence time longer than $N$ symbol periods) and fast fading (coherence time shorter  than one symbol period) represent two limiting cases of our channel model when the channels are fully correlated $R_h(\ell) \approx 1$ or uncorrelated $|R_h(\ell)|\approx 0$ for all $\ell=1,2,\cdots, N-1$, while partially correlated channels, i.e., $-1< R_h(\ell) <1$, correspond to intermediate fading conditions.
In what follows, we assume that $f_{\rm D}$ and $T_{\rm s}$ are known a priori for our design.\footnote{The code design depends on $f_\text{D}$, which reflects the user’s velocity. In practice, multiple codebooks can be pretrained for different $f_\text{D}$ values, and the appropriate one selected based on the measured $f_\text{D}$.
}

\subsection{Sensing Model}
The sensing RX aims to provide accurate ranging performance for targets. Therefore, we consider a single-carrier system for sensing $P$ targets. In this case, ${\bf H}_{\rm s}$ is a Toeplitz matrix, whose $\left(m,n\right)$-th entry is given by
\begin{align}
g_{m-n} \!=\! \sum_{i=1}^{P} h_{\rm s}^{(i)} \! \int_{ - \infty }^\infty {r}\left( t-nT_{\rm s}-\tau^{(i)}\right){r^*}\left( {t - mT_{\rm s}} \right){\rm{d}}t \label{rad_g_m_n}
\end{align}
The element is denoted by $g_{m-n}$ since its value depends on the difference $m-n$, not on the absolute values of $m$ and $n$.
In~\eqref{rad_g_m_n}, $r\left(t\right)$ is the underlying Nyquist-root shaping pulse applied at both the TX and RXs, and $T_{\rm s}$ is the Nyquist symbol time. Furthermore, $h_{\rm s}^{(i)}$ is the sensing fading coefficient for the $i$-th target, which is defined as $h_{\rm s}^{(i)} \sim {\cal CN}\left(0,1\right)$. The delay of each target is represented by $\tau^{(i)}$. Assuming that a coarse synchronization has already taken place after some forms of block boundary detection, the delays are therefore constrained within the frame duration, i.e., $0 \le \tau^{(i)} < NT_{\rm s}$ for all $i$. We consider uniformly distributed delays, i.e., $\tau^{(i)} \sim \mathcal{U}[0, L_{\rm s}T_{\rm s}]$ for some constant $0<L_{\rm s}< N$.

Based on the above description, the sensing RX attempts to estimate the delay $\tau^{(i)}$ of each target from ${{\bf{r}}_{\rm{s}}}$, and the corresponding estimate is denoted by ${\hat \tau}^{(i)}$. {\color{black}
It is assumed that the number of targets $P$ and the delay statistics are known during the code design phase, as part of the adopted sensing channel model. This assumption is consistent with the literature that considers detection (deciding the number of targets) and estimation (deciding the values of parameters) separately~\cite{kay1993fundamentals}. 

This study focuses on optimizing the waveform for the aforementioned delay estimation problem for a given and fixed $P$.
Extending the framework to scenarios with unknown or time-varying $P$ is left for future work.}

{\color{black}\section{Theoretical Analysis }\label{sec:optimal_schemes}
This section presents the theoretical analysis underlying the proposed framework, including derivation of the noncoherent maximum likelihood (ML) RX for communication and the MCRB for sensing. These results provide theoretical justification for the adopted performance metrics, which are further used to guide the training of the DL-based waveform design.

\subsection{Maximum Likelihood Detection for Noncoherent Communication}}
The noncoherent detection/decoding problem is not new, and several insightful results are well-established  in communication theory. 
However, most noncoherent detectors, e.g.,~\cite{proakis2008digital}, only consider symbol-wise detection, which is not directly applicable to our case where the channel coefficients can vary over time. In what follows, we derive the frame-wise optimal detector of interest for decision-making.

Since the information messages are equiprobable, it is well-known that the BLER is minimized by the ML decoder as characterized below:

\begin{equation}\label{ML_rule}
{\bf{\hat x}} = \mathop {\arg \max }\limits_{{\bf{x}} \in {\cal X}} \ \Pr \left({\bf{r}}_{\rm{c}} |{{\bf{x}}} \right).
\end{equation}
Then, the information vector $\hat{\bf{s}}$ is obtained by one-to-one remapping from $\hat{\bf{x}}$.
The likelihood $\Pr \left({\bf{r}}_{\rm{c}} |{\bf{x}}\right)$ is the transition probability of the channel ${{\bf{r}}_{\rm{c}}} = {{\bf{H}}_{\rm{c}}}{\bf{x}}+ {{\bf{w}}_{\rm{c}}}$, where ${{\bf{r}}_{\rm{c}}}$ is conditionally circularly symmetric Gaussian for given ${\bf{x}}$, via the covariance ${\bf{R}}_{{\bf{r}_{\rm c}|{\bf{x}}}}=N_{0, \rm c}\mathbf{I}_N+\mathbf{R}_{\rm c}\odot{{\bf{x}}{{\bf{x}}^{\rm{H}}}} =
N_{0, \rm c}\mathbf{I}_N+{{\bf{X}}\mathbf{R}_{\rm c}{{\bf{X}}^{\rm{H}}}}$, where ${\bf{X}}=\text{diag}({\bf{x}})$. This leads to
\begin{equation}\label{ML_rule_prob}
\Pr \left( {{{\bf{r}}_{\rm{c}}}\left| {{\bf{ x}}} \right.} \right) = \frac{1}{ \pi^N{\det \left( {\bf{R}}_{{\bf{r}_{\rm c}|{\bf{x}}}} \right) } }\exp \left( { - {\bf{r}}_{\rm{c}}^{\rm{H}}{\bf{R}}_{{\bf{r}_{\rm c}|{\bf{x}}}}^{ - 1}{{\bf{r}}_{\rm{c}}}} \right).
\end{equation}
Finally, according to~\eqref{ML_rule}, ${\bf{\hat s}}$ is the one that maximizes the probability in~\eqref{ML_rule_prob}, i.e. ${\bf{\hat s}}=\argmax_{{\bf{ s}}\in\{0,1\}^{K} }\Pr \left( {{{\bf{r}}_{\rm{c}}}\left| {{\bf{ x}}} \right.}=f_{\theta}({\bf{ s}}) \right)$.

The derived ML decoding rule provides important insights for the codeword design, which will be presented in Section~\ref{sec:heuristic_understanding}.
While ML decoding provides optimal performance in terms of BLER, using likelihood as a performance metric is not scalable.
Evaluating communication quality or using it for training the NN requires computing the likelihood of all $2^K$ hypotheses. Therefore, we use bit error rate (BER) for evaluation and binary cross-entropy (BCE) loss for training, as will be explained in detail in Section~\ref{sec:DL-based_design}.

\subsection{Cram\'er-Rao Lower Bounds Analysis for Sensing}
The Cram\'er-Rao Lower Bound (CRLB) provides a fundamental limit on estimation error of unbiased estimators, stating that their precision is bounded by the inverse of the FIM, as established in~\cite{van2004detection}. 
Suppose that $\bm{u}=(u_1, u_2, \dots, u_d)\in\mathbb{R}^d$ is the parameter vector to be estimated, and that $\bf{y}$ is the observation defined as the conditional distribution {\color{black}$p_{\bf{y}}({\bf{y}}|\bm{u})$} for a given $\bm{u}$.
Then, 
{\color{black}
for any unbiased estimator ${\hat{\bm{u}}(y)}$ of ${\bm{u}}$, the estimator covariance satisfies 
\begin{equation}
\text{Cov}[\hat{\bm{u}}({\bf y})]\succeq {\bf{J}}^{\rm -1}(\bm{u}), \label{eq_CRLB}
\end{equation}
where ${\bf{J}}(\bm{u})$ is the FIM defined by its $(i,j)$-th element  
\begin{equation}
    [{\bf{J}}(\bm{u})]_{i,j}=-\mathbb{E}_{{\bf{y}}|\bm{u}}\left[\frac{\partial^{2}\log(p_{\bf{y}}({\bf{y}}|\bm{u}))}{\partial u_i\partial u_j}\right],\label{eq_CRLB_FIM}
\end{equation}
for $i,j=1,2,\dots,d$.

When observation ${\bf y}$ depends not only on ${\bm u}$, but also nuisance parameters ${\bm v}=(v_1, v_2, \dots, v_{d'})$, the conditional probability density function (PDF) can be obtained through $p_{\bf{y}}({\bf{y}}|{\bm u}) =\int_{\bm v}p_{\bf{y}}({\bf y}|{\bm u, \bm v})p_{\bm v}(\bm v|\bm u)d\bm v$, and the CRLB can be obtained by using \eqref{eq_CRLB}. The CRLB holds for an estimator of $\bm u$ that is unbiased for $\bm v$ in the global sense. 

In case the computation of $p_{\bf{y}}({\bf{y}}|{\bm u})$ is complicated, the Miller-Chang lower bound \cite{miller1978modified} can be useful. The Miller-Chang computes the CRLB for each realization of nuisance parameters $\bm{v}$ and then takes the average of it: 
\begin{align}
&\text{Cov}[\hat{\bm{u}}({\bm y})]\geq \mathbb{E}_{\bm v}[{\bf J}^{-1}(\bm u,\bm{v})]=\int_{\bm v} {\bf J}^{-1}(\bm u,\bm{v})p_{\bm v}(\bm v)d{\bm v},\\
&\text{where } [{\bf J}(\bm u,\bm{v})]_{i,j}=-\mathbb{E}_{{\bf y}|\bm{u}, \bm v}\left[\frac{\partial^{2}\log(p_{\bf{y
}}({\bf y}|\bm{u}, \bm v))}{\partial u_i\partial u_j}\right],
\end{align}
for $i,j=1,2,\dots,d$. This holds for an estimator that is unbiased for every value of $\bm v$.

For multi-dimensional problems where $d>1$ and $d'>1$, solving an analytical form of the inverse of the FIM can be challenging. A modified Cramér-Rao lower bound (MCRB) from \cite{gini1998modified} can be useful for such cases, which states that for any unbiased estimator of $\bm u$, it holds
\begin{align}
&\text{Cov}[\hat{\bm{u}}({\bm y})] \succeq \left( \mathbb{E}_{\bm v}[{\bf J}(\bm u,\bm{v})] \right)^{-1}.
\end{align}
The MCRB is less tight than the Miller-Chang lower bound, however, it simplifies the FIM before taking inverse. 

For the ranging problem considered, the parameters to estimate is the delay vector $\bm \tau$, while the random fading ${\bf h}_{\rm s} = (h^{(1)}_{\rm s}, h^{(2)}_{\rm s},\dots, h^{(P)}_{\rm s})$ and the random input $\bf x$ are regarded as nuisance parameters. 
It can be easily shown that in our case, the standard CRB for given $\bf x$, obtained by first removing the conditioning on $\bf h_{\rm s}$ and then applying the FIM, yields a non-diagonal FIM expression whose inverse cannot be given in closed form in general.
This motivates the use of the MCRB as an analytically tractable metric for the sensing performance.
For each delay $\tau^{(i)}$, the bound of the estimation error can be written:
\begin{align}
\text{Var}[\hat{\tau}^{(i)}|{\bf x}] \geq&\ [ \left( \mathbb{E}_{{\bf h_{\rm s}}\!}\left[ {\bf J}(\bm \tau,{\bf h}_{\rm s}, {\bf x})\right]\right)^{-1} ]_{i,i}, \label{eq_MCRB_}\\
\hspace{-0.2cm} [{\bf J}(\bm \tau, {\bf h}_{\rm s}, {\bf x})]_{i,j} \!=\! 
-&\mathbb{E}_{{\bf y_{\rm s}}|\bm{\tau}, {\bf h}_{\rm s}, {\bf x} \! }\left[\frac{\partial^{2}\log(p_{\bf{y}_{\rm s}}({\bf y}_{\rm s}|\bm{\tau}, {\bf h}_{\rm s}, {\bf x}))}{\partial\tau_i\partial\tau_j}\right]\label{eq_MCRB_element}
\end{align}}

{\color{black}In this study, we consider an energy-normalized Nyquist root pulse $R(t)$ such that $R(f) = \sqrt{P(f)}$ for a Nyquist pulse $P(f)$ and $\int_{\infty}^{\infty}|r(t)|^2dt=1$.}
The corresponding MCRB is provided in the following theorem. 

\begin{theorem}\label{thm_MCRB}{\color{black}
For any $P$ and any delay $\tau^{(a)}$ for $a=1,2,\dots, P$, the MCRB \eqref{eq_MCRB_} takes on the expression \begin{align}\label{eq_MCRB}
&\text{Var}[\hat{\tau}^{(a)}|{\bf x}]\\
\quad \geq&\ \frac{N_{0,\rm s}}{2} \left(\!\sum_{m=1}^N \left|\sum_{n=1}^N x_n 
{\color{black}p'\left((m-n)T_{\rm s}-\tau^{(a)} \right)}
\right|^2\right)^{\!-1},\!\!
\end{align}
where $p'(x) \coloneq \frac{\partial }{\partial x}p(x)$. We note that the non-diagonal element where $i\neq j$ in~\eqref{eq_MCRB_element} is zero, which results in a diagonal FIM.}
\end{theorem}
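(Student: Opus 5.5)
The plan is to compute the Fisher information matrix conditioned on $(\bm\tau,{\bf h}_{\rm s},{\bf x})$ for the Gaussian observation ${\bf r}_{\rm s}={\bf H}_{\rm s}{\bf x}+{\bf w}_{\rm s}$, then average over ${\bf h}_{\rm s}$ as \eqref{eq_MCRB_} prescribes, and finally invert.

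Conditioned on $(\bm\tau,{\bf h}_{\rm s},{\bf x})$, ${\bf r}_{\rm s}\sim\mathcal{CN}(\bm\mu,N_{0,\rm s}{\bf I}_N)$ with mean $\bm\mu(\bm\tau)={\bf H}_{\rm s}{\bf x}$. The covariance does not depend on $\bm\tau$, so the complex Slepian--Bangs formula gives
\begin{equation*}
[{\bf J}(\bm\tau,{\bf h}_{\rm s},{\bf x})]_{i,j}=\frac{2}{N_{0,\rm s}}\,\mathrm{Re}\left\{\left(\frac{\partial\bm\mu}{\partial\tau^{(i)}}\right)^{\rm H}\frac{\partial\bm\mu}{\partial\tau^{(j)}}\right\}.
\end{equation*}
Next I would write \eqref{rad_g_m_n} as $g_{m-n}=\sum_i h_{\rm s}^{(i)}p((m-n)T_{\rm s}-\tau^{(i)})$. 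Here $p(t)=\int r(s)r^*(s-t)\,ds$ is the Nyquist pulse, i.e.\ the autocorrelation of $r$, with $R(f)=\sqrt{P(f)}$. A change of variables $t\to t-nT_{\rm s}-\tau^{(i)}$ puts the integral in exactly this form. The $m$-th entry of the mean is then $\mu_m=\sum_i h_{\rm s}^{(i)}\sum_n x_n\,p((m-n)T_{\rm s}-\tau^{(i)})$, and differentiating gives
\begin{equation*}
\frac{\partial\mu_m}{\partial\tau^{(i)}}=-h_{\rm s}^{(i)}\,q^{(i)}_m,\qquad q^{(i)}_m\coloneq\sum_n x_n\,p'((m-n)T_{\rm s}-\tau^{(i)}).
\end{equation*}
The key feature is that each derivative depends on only one fading coefficient.

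Averaging over ${\bf h}_{\rm s}$ is the step that removes the cross terms. The $(i,j)$ entry is proportional to $\mathrm{Re}\{h_{\rm s}^{(i)*}h_{\rm s}^{(j)}\,{\bf q}^{(i)\rm H}{\bf q}^{(j)}\}$. The coefficients $h_{\rm s}^{(i)}$ are independent and zero-mean, so $\mathbb{E}[h_{\rm s}^{(i)*}h_{\rm s}^{(j)}]=\delta_{ij}$, using unit variance. The averaged FIM is therefore diagonal, which proves the remark in the statement, with diagonal entries
\begin{equation*}
\frac{2}{N_{0,\rm s}}\,\|{\bf q}^{(a)}\|^2=\frac{2}{N_{0,\rm s}}\sum_{m=1}^N\left|\sum_{n=1}^N x_n\,p'\left((m-n)T_{\rm s}-\tau^{(a)}\right)\right|^2.
\end{equation*}
Inverting a diagonal matrix entrywise then yields \eqref{eq_MCRB}.

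The main obstacle is careful bookkeeping rather than any conceptual difficulty. One point is the factor $2$ in the real-parameter Fisher information for complex circular Gaussian noise. Another is the sign and argument convention of $p'$: the derivative with respect to $\tau$ equals $-p'$, and the sign disappears under $|\cdot|^2$. The third is confirming that the pulse cross-correlation in \eqref{rad_g_m_n} really reduces to $p$ evaluated at $(m-n)T_{\rm s}-\tau^{(i)}$. For that I would use the Fourier form $p(t)=\int P(f)e^{j2\pi ft}\,df$ together with $|R(f)|^2=P(f)$. The assumed independence and zero mean of the $h_{\rm s}^{(i)}$ is what makes the averaged FIM diagonal; this is the point where the MCRB, rather than the exact CRB, is essential.
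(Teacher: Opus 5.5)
Your proposal is correct and follows essentially the same route as the paper's appendix proof. The paper likewise reduces the pulse cross-correlation to $p((m-n)T_{\rm s}-\tau^{(i)})$ via the Fourier representation, applies the Gaussian-mean (Slepian--Bangs) FIM formula, notes that each $\partial{\bf m}/\partial\tau^{(a)}$ carries only $h_{\rm s}^{(a)}$, and uses $\mathbb{E}[(h_{\rm s}^{(a)})^{*}h_{\rm s}^{(b)}]=\delta_{ab}$ to make the averaged FIM diagonal before inverting; the only difference is that the paper packages the mean as ${\bf X}'\bm\Gamma(\bm\tau){\bf h}_{\rm s}$, whereas you work entry-wise and keep the real part and the $-p'$ sign explicit.
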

\proof
Refer to Appendix~\ref{proof_thm_MCRB}.

{\color{black}We denote the lower bound by $\text{MCRB}({\bf x}, \tau^{(a)})$.} Remark that the MCRB is a diagonal matrix under this estimation problem and can be computed element-wise.

\section{Design Principles for Code Optimization}\label{sec:heuristic_understanding}
In this section, we discuss code design principles based on the theoretical analysis developed in the previous section. Note that the optimal code design for ISAC is still an open problem, especially in the short block length regime. Therefore, we are here showing heuristic understandings of the problem by considering special examples, which serve as an important building block for our performance evaluation.

\subsection{Implications of the Likelihood on Code Design for Noncoherent Detection with Non- and Fully-Correlated Coefficients}

As a driver for the proposed DNN-based code design, we consider two limiting cases with uncorrelated and fully correlated coefficients.
For uncorrelated coefficients, the covariance ${\bf R}_{\rm c}$ of the channel coefficients is an identity matrix, while for fully correlated coefficients, the covariance is a matrix of ones. 
In both cases, the conditional covariance 
${\bf{R}}_{{\bf{r}}_{\rm c}|{\bf{x}}}$ and the likelihood function in~\eqref{ML_rule_prob} can be further simplified as shown in Appendix~\ref{appendix:likelihood_specialcase}. 

For uncorrelated coefficients, the likelihood function is
\begin{align}\label{ML_rule_non_correlated}
\Pr \left( {{\bf{r}}_{\rm{c}}\left| {{\bf{ x}}} \right.} \right) 
= \frac{\exp \left( - \sum_{i = 1}^N \frac{|r_{{\rm c},i}|^2}{|x_i|^2+N_{0,\rm c}} \right)}{{
\pi^N\prod_{i=1}^N{\left(|x_i|^2 + N_{0,\rm c} \right)}
}}, 
\end{align}
where ${r'_{{\rm{c}},i}}$ is the $i$-th element of ${\bf{r}'}_{\!\!\rm c}$. 
Since~\eqref{ML_rule_non_correlated} depends only on the symbol amplitudes $|x_i|^2$, PSK-type signals perform poorly as all symbols have identical energy and indistinguishable.

The optimal code design can be further studied through the pairwise-error probability (PEP). A pairwise error occurs if a competing codeword $\mathbf{x}'$ yields a higher likelihood than the true codeword $\mathbf{x}$, i.e., when
\begin{equation}
\ln\frac{\Pr({\bf r|x})}{\Pr({\bf r|x'})}=-\sum_{i=1}^N\frac{|r_{{\rm c},i}|^2}{|x_i|^2+N_{0, \rm c}} +\sum_{i=1}^N\frac{|r_{{\rm c},i}|^2}{|x'_i|^2+N_{0, \rm c}} < 0.
\end{equation}
Let $\mathcal{S}({\bf h}_{\rm c})$ be the set of indices where $|h_i|\not\ll 1$. For indices with $|h_i|\ll1$, the summands essentially cancel, so the log-likelihood (LLR) simplifies to
\begin{equation}
-\underbrace{\sum_{i\in\mathcal{S}({\bf h_{\rm c}})}\frac{|h_{{\rm c},i}x_i+w_{{\rm c}, i}|^2}{|x_i|^2+N_{0, \rm c}}}_{A} +\underbrace{\sum_{i\in \mathcal{S}({\bf h_{\rm c}})}\frac{|h_{{\rm c},i}x_i+w_{{\rm c}, i}|^2}{|x'_i|^2+N_{0, \rm c}}}_{B}.
\end{equation} 
{\color{black}In order to gain insight on the qualitative behavior of the pairwise error metric and therefore obtain criteria for the code optimization, we consider these terms in the case where the instantaneous signal strength is either much larger or much smaller than the noise level $N_{0, \rm c}$.}
Denote the approximation by $-A+B=-\sum_{i\in\mathcal{S}({\bf h}_{\rm c})}A_i + \sum_{i\in\mathcal{S}({\bf h}_{\rm c})}B_i$. Let $\mathcal{L}(\mathbf{x})$ denote the set of indices with $|x_i|^2 \gg N_{0,\rm c}$. The summands can then be approximated as
\begin{align}
A_i &\approx 
\begin{cases}
    |h_{{\rm c}, i}|^2, & |x_i|^2\gg N_{0, \rm c} \\
    \tfrac{|w_{{\rm c}, i}|^2}{N_{0, \rm c}}, & |x_i|^2\ll N_{0,\rm c},
\end{cases} \\
B_i &\approx
\begin{cases}
    |h_{{\rm c}, i}|^2, & |x_i|^2 \gg N_{0, \rm c},\ |x'_i|^2 \gg N_{0, \rm c} \\[2pt]
    \tfrac{|w_{{\rm c}, i}|^2}{N_{0, \rm c}}, & |x_i|^2 \ll N_{0, \rm c},\ |x'_i|^2 \ll N_{0, \rm c} \\[2pt]
    0, & |x_i|^2 \ll N_{0, \rm c},\ |x'_i|^2 \gg N_{0, \rm c} \\[2pt]
    \tfrac{|h_{{\rm c}, i}x_i|^2}{N_{0, \rm c}}, & |x_i|^2 \gg N_{0, \rm c},\ |x'_i|^2 \ll N_{0, \rm c}.
\end{cases}
\end{align}
Let $\mathcal{S}=\mathcal{S}({\bf h}_{\rm c})$, 
$\mathcal{L}=\mathcal{L}(\mathbf{x})$, 
and $\mathcal{L}'=\mathcal{L}(\mathbf{x}')$. 
Then, we have
\begin{align}
A &\approx \sum_{i\in\mathcal{S}\cap\mathcal{L}} |h_{{\rm c}, i}|^2 
    + \sum_{i\in\mathcal{S}\setminus\mathcal{L}} \!\!\! \tfrac{|w_{{\rm c}, i}|^2}{N_{0, \rm c}}, \\[4pt]
B &\approx \!\!\sum_{i\in\mathcal{S}\cap\mathcal{L}\cap\mathcal{L}'} \!\!|h_{{\rm c}, i}|^2\! 
    + \!\!\!\sum_{i\in\mathcal{S}\cap\mathcal{L}\setminus\mathcal{L}'} \!\!\! \tfrac{|h_{{\rm c}, i}x'_i|^2}{N_{0, \rm c}}
    + \!\!\!\sum_{i\in\mathcal{S}\setminus(\mathcal{L}\cup\mathcal{L}')} \!\!\!\!\! \tfrac{|w_{{\rm c}, i}|^2}{N_{0, \rm c}}.
\end{align}

From this analysis, we conclude that the PEP is small with high probability when each codeword contains both large and small components, so that a few deep fades cannot erase the entire signal. Moreover, among the surviving terms $i \in \mathcal{S}({\bf h}_{\rm c})$, the PEP decreases if the large components of two codewords do not coincide. Designing codewords with little overlap in their strong entries ensures that many indices are strong for $\mathbf{x}$ but weak for $\mathbf{x}'$, yielding net positive separation in the LLR and lowering the PEP. If the large components overlap, however, their contributions cancel, reducing distinguishability.

Next, the likelihood function for fully-correlated coefficients can be similarly derived as follows:
\begin{align}
&\Pr \left({{{\bf{r}}_{\rm{c}}}\left| {{\bf{ x}}} \right.} \right)= \frac{\exp \Big( -\frac{|{\bf{r}}_{\rm c}|^2}{N_{0,\rm c}} \!+\!\frac{|{\bf{r}}_{\rm c}^{\rm H}{\bf{x}}|^2}{N_{0,\rm c}^2\big(1+\frac{|{\bf{x}}|^2}{N_{0,\rm c}}\big)}\Big)}{{
\pi^N\!N_{0,\rm c}^N\Big(1\!+\!\frac{|{\bf{x}}|^2}{N_{0,\rm c}}\Big)}} .\label{eq_likelihood_lb}
\end{align}
When the signal power $|{\bf{x}}|$ is constant regardless of codewords, the likelihood only depends on the amplitude of the inner product
$|{{\bf{r}}}_{\rm c}^{\rm H}{\bf{x}}|^2=|{\bf{r}}_{\rm c}\cdot{\bf{x}}|^2$. Hence, the ML decision can be simplified as the \textit{square-law} detector
    $\hat{\bf{x}}=\arg\max_{\bf{x}'} |{\bf{r}}_{\rm c}\cdot{\bf{x}'}|^2.$
The optimal constellation maximizes the amplitude of the inner product of the correct hypothesis, while suppressing that of the others on the $N$-dimensional sphere.
{\color{black}The expected squared correlation between ${\bf{r}}_{\rm c}$ and a generic codeword $\bf{x}'$, given that $\bf{x}$ is transmitted, takes on the form}
 \begin{align}
    &\mathbb{E}_{{\bf{r}}_{\rm c}|{\bf{x}}}\left[|{\bf{r}}_{\rm c} \!\cdot\! {\bf{x}'}|^2\right]
    =\mathbb{E}_{{\bf{H}}_{\rm{c}}}\!\left[|({{\bf{H}}_{\rm{c}}}{\bf{x}}) \!\cdot\! {\bf{x}'}|^2\right] \!+ \mathbb{E}_{{{\bf{w}}_{\rm{c}}}}\!\left[|{{\bf{w}}_{\rm{c}}} \!\cdot{\bf{x}'}|^2\right]\\
    &=\ \mathbb{E}_{{\bf{H}}_{\rm{c}}}
    \!\Big[|{{{h}}_{{\rm{c}},1}}|^2\sum_{i=1}^N |{{x}^{*}_i{x'}_i}|^2\Big] 
    \!+ N_{0,\rm c}\sum_{i=1}^N| {{x}'_i}|^2 \\
    &=\sum_{i=1}^N |{{x}^{*}_i{x}'_i}|^2 + N_{0,\rm c}|{\bf{x}'}|^2 = |{\bf{x}'}^{\rm H}{\bf{x}}|^2+N_{0,\rm c}|{\bf{x}'}|^2.
\end{align}
Note that expected LLR depends only on mutual coherence $|{\bf{x}}'^{\rm H}{\bf{x}}|^2$ when the signal power $|{\bf{x}}'|^2$ is constant. 
This suggests that good spherical codes must have low maximum squared correlation between codewords.
A classical lower bound for mutual coherence between codewords in a codebook of size $2^K$ and dimensions $N$ is given by the Welch bound~\cite{welch1974lower}, and codebooks achieving the Welch bound are referred to as Welch bound equality sets, or equiangular tight frames.
For particular pairs of $N$ and $K$, such a set can be analytically solved, otherwise it requires numerical solutions. 

{\color{black}We have two limiting cases, and the intuition gained from the pairwise error probability analysis and expected LLR yield different design guidelines: 
for independent fading, codewords should have variable-amplitude symbols and the position of their strong components should be as distinct as possible. 
For correlated fading, the low-cross correlation code design principle applies. 
We conjecture that a mixture of these two characteristics yields good codes in the intermediate correlation conditions. This is difficult to design analytically, and we resort to the learning-based optimization in Sections \ref{sec:sim_toy_example} and \ref{sec:sim_longer_blocklength}.}

\subsection{Implications of MCRB on Code Design for Sensing} \label{ssec:heuristic_understanding_sensing}
Theorem~\ref{thm_MCRB} shows how MCRB is determined for given $\bm\tau$ and ${\bf x}$. After some manipulations, we can rewrite~\eqref{eq_MCRB} by  
\begin{align}
&MCRB({\bf x}, \tau^{(a)}) =\frac{N_{0,\rm s}}{2}\left({\bf x}^{\rm H}{\bf V}{\bf x}\right)^{-1},\label{eq_MCRB_derivation}
\end{align}
where ${\bf V}$ is defined by the $\left(l,k\right)$-th element equal to $\sum_{m=1}^N {\color{black}p'\left((m-k)T_{\rm s}-\tau^{(a)} \right)}{\color{black}p'\left((m-\ell)T_{\rm s}-\tau^{(a)} \right)}$. The optimal sequence of $\bf x$ that minimizes $MCRB({\bf x}, \tau^{(a)})$ should maximize ${\bf x}^{\rm H}{\bf V}{\bf x}$, which, under the power constraint, is the solution of the maximum eigenvector problem. Based on this, we can obtain the following two insights for code design:
\begin{itemize}
    \item Given $\tau^{(a)}$, the optimal sequence of $\bf x$ can be obtained by the principal eigenvector of $\bf V$. The eigenvector reduces to a constant vector only when all row vectors have identical row sums. Since this condition is generally not satisfied, the optimal sequence $\bf x$ will, in general, allocate different symbol energies across its entries.
    \item Notice that~\eqref{eq_MCRB_derivation} is a quadratic form. Therefore, applying {\color{black}a common phase rotation} to $\bf x$ does not change the MCRB value. This implies that the optimal sequence is not unique. 
\end{itemize}

Analytically solving the optimal sequence of $\bf x$ would be challenging since the value of $\tau^{(a)}$ cannot be known a prior.  
However, as we will demonstrate in the later part of this paper, the proposed NN can effectively find a good solution for the sensing waveform design without the knowledge of $\tau^{(a)}$. 
\subsection{Comparison of the Code Design for Communication and Sensing}
The code designs based on likelihood and MCRB exhibit both similarities and differences in the optimal strategies for communication and sensing. In both cases, the optimal codewords allocate power unevenly across symbols, indicating that constant-amplitude designs are suboptimal. However, the results also reveal that the optimal designs for communication and sensing may not always align, as the relationships among the optimal codewords can differ.

The MCRB is invariant to common phase rotation of $\bf x$, meaning that if one optimal vector exists, then all its rotated versions are also optimal. Likewise, the likelihood expressions in \eqref{ML_rule_non_correlated} and \eqref{eq_likelihood_lb} are invariant to a common phase rotation, and in the fully correlated case. 
In contrast to sensing, where rotation of an optimal vector remains optimal, communication requires codewords to be distinguishable; codewords that produce the same likelihood must be avoided to ensure reliability. This difference suggests a potential trade-off between communication and sensing. 

However, such a trade-off is not guaranteed because the set of sensing-optimal codewords may consist of multiple distinct sets, each spanned by rotation of a different vector, and some of these sets could still yield good likelihood performance for communication. {\color{black}Furthermore, it is difficult to deduce insight into code design for a general communication channel case with $f_{\rm D}\in (0,1)$ and its relation with the sensing optimal code.
To further clarify this uncertainty, we resort to numerical solution optimized by using DL. The following section elaborates on the design of the DL system for code optimization.}

\begin{algorithm}[t!]
\caption{The training algorithm of the framework.}\label{alg:training}
\begin{algorithmic}
\State Trainable parameters: weights of the AE ${\bm \theta}_{AE}$. 
\State Hyper parameters: learning rate $\eta_e$, loss ratio $\gamma$, batch size $B$, $\rm SNR_{low}$ and $\rm SNR_{ high}$ for training.
$\ell_{\rm c}$ and $\ell_{\rm s}$.
\For {Epochs} 
\For {Batches} 
\State {\bf Forward pass}
\For {$B$ samples indexed by $b=1,2,\dots,B$}
\State $\mathbf{s}^{(b)} \leftarrow$ Sample uniformly from $\{0,1\}^{\rm K}$ 
\State $\mathbf{x}^{(b)} = f_{\theta}(\mathbf{s}^{(b)})$ \hfill \textbf{Encoding}
\State Sample ${\rm SNR}^{(b)} \overset{\text{i.i.d.}}{\sim} \mathcal{U}_{[\rm SNR_{low}, SNR_{high}]}$
\State ${\rm SNR}^{(b)}_{\rm lin} \!=\!10^{{\rm SNR}^{(b)}/10}$, ${N^{(b)}_{0, \rm c}} \!=\! {N^{(b)}_{0, \rm s}} \!=\! 1/{\rm SNR_{lin}^{(b)}}$ 
\State $\mathbf{r}^{(b)}_{\rm c} = \mathbf{H}_{\rm c}^{(b)}\mathbf{x}^{(b)} + \mathbf{w}^{(b)}_{\rm c}$ \hfill \textbf{Channel}
\State $\mathbf{r}^{(b)}_{\rm s} = \mathbf{H}^{(b)}_{\rm s}\mathbf{x}^{(b)} + \mathbf{w}^{(b)}_{\rm s}$
\State $ {\bf B}^{(b)}  \leftarrow g_{\theta}(\mathbf{r}^{(b)}_{\rm s})$ \hfill \textbf{Decoding}
\EndFor
\State $\ell_{\rm c} \leftarrow \frac{1}{B}\sum_{b=1}^B\ell_{\rm c}({\bf s}^{(b)},{\bf B}^{(b)})$ \hfill \textbf{Communication loss}
\State $\ell_{\rm s} \leftarrow$ $\frac{1}{B}\sum_{b=1}^B \ell_{\rm s}({\bf x}^{(b)}, {\bm \tau}^{(b)} )$ \hfill \textbf{Sensing loss }
\State $\ell_j\leftarrow \gamma\ell_{\rm c} + (1-\gamma)\ell_{\rm s}$ \hfill \textbf{Joint loss}
\State \textbf{Backpropagation}
\State ${\bm \theta}_{AE} \leftarrow {\bm \theta}_{AE} - \eta_e$Opt($\ell_j$) \hfill \textbf{Update Autoencoder}
\EndFor 
\EndFor
\end{algorithmic}
\end{algorithm}

\section{Deep Learning-based Code Design}\label{sec:DL-based_design} 
To jointly train the TX NN for communication and sensing, we use a weighted sum of communication and sensing loss functions, denoted by $\ell_{\rm c}$ and $\ell_{\rm s}$, respectively:
$\ell_j \coloneqq \gamma \ell_{\rm c} + (1-\gamma) \ell_{\rm s}$, where $\gamma\in[0,1]$ is the weight of the communication loss. When $\gamma=1$, the TX NN is trained only to reduce the communication loss, and similarly when $\gamma=0$, it learns to reduce the sensing loss only.
For communication, we use BCE loss between the message ${\bf s}$ and the soft-decoded output of the RX NN because it works well for classification tasks. Let the soft-decoded output vector denoted by ${\bf B}({\bf r}_{\rm c})\coloneq(B_1, B_2, \dots, B_K)=f_{\theta}({\bf r_{\rm c}})\in[0,1]^K$. The BCE of one sample pair ${\bf s}$ and ${\bf B}$ is defined as 
\begin{equation}
\ell_{\rm c}({\bf s}, {\bf B})\!=\!-\frac{1}{K} \!\sum_{k=1}^K\left(s_k\log B_k \!+\!(1-s_k)\log(1-B_k)\right).\!
\end{equation}
The final decoded output is obtained simply rounding up the scores, i.e., $\hat{\bf s}=\lfloor{\bf B}\rceil$.
For sensing, we use the MCRB in~\eqref{eq_MCRB} averaged over $P$ targets without considering a particular delay estimator. 
The loss function of each input bit sequence ${\bf s}$ and its codeword ${\bf x}$ is defined as
\begin{align}
\ell_{\rm s}({\bf x}, {\bm \tau)}=&\frac{1}{P} \!\sum_{p=1}^P \frac{1}{T^2_{\rm s}}\text{MCRB}( {\bf x}, {\tau^{(p)}}).
\end{align}
The coefficient $1/T^2_{\rm s}$ is multiplied to normalize the MCRB with respect to the symbol time.
We assume genie training data, i.e. messages ${\bf s}$ are known at RX NN, and the number of targets $P$ and the delays are known at TX NN during the training phase.
During the inference phase, the RX NN only requires the channel output {\color{black}for decoding the message. For sensing, we assume that $P$ is estimated accurately by a target detection algorithm before delay estimation.} 
For training, we use a fixed value of $P$.
To ensure generalization across SNR values, we train the AE using uniformly random SNRs sampled within a predefined range, adjusting $N_{0, \rm c}$ and $N_{0, \rm s}$ accordingly. In the results of this paper, we used $N_{0, \rm c}=N_{0, \rm s}$ for simplicity, however, they can be configured independently.

The model can be trained in two modes: (1) joint training, where the weights of both the RX and TX NNs are updated simultaneously in each iteration; and (2) iterative training, where only one network, either the RX NN or TX NN, is updated at a time. Based on our observations, iterative training converges more stably and ultimately achieves better performance than joint training, consistent with the findings in~\cite{jiang2019turbo}.
One cycle of iterative training consists of five epochs of the RX NN followed by one epoch of the TX NN. Each epoch trains the network with 100 batches, each containing 500 randomly generated bit sequences. The learning rate is initially set to $10^{-4}$ and gradually reduced to $10^{-5}$. For the sensing-optimal solution, only the TX NN is trained.

{\color{black}In our simulations, we consider three AE architectures from~\cite{jiang2019turbo}: a convolutional NN (CNN) and two turbo-inspired variants (Turbo-CNN and Turbo-RNN), mainly to assess the impact of iterative structures and NN types. Implementation details are referred to~\cite{jiang2019turbo} and our code repository\footnote{\texttt{https://github.com/muahkim/ISAC-ML-Toeplitz-Clarkes}}.

All models operate at a real-valued code rate of $1/3$. For complex-valued codewords, real and imaginary parts are mapped to alternating entries, yielding an effective code rate of $2/3$. The encoder output is normalized as $\sum_{i=1}^N |x_i|^2 = N$.

The CNN-based AE is used as the primary model due to its simplicity and strong performance. It employs a single feed-forward convolutional branch for both encoding and decoding, without interleaving or iterative processing.

For comparison, Turbo-CNN and Turbo-RNN extend this structure using turbo-inspired parallel branches and iterative decoding. Turbo-CNN uses convolutional layers, while Turbo-RNN replaces them with recurrent units (e.g., GRUs), increasing complexity due to iterative and/or recurrent processing.}

\begin{figure*}[h!]
\centering
    \subfloat[\color{black}BERs of CNN trained for communication, compared with classical schemes.\label{fig:BER_K6N9_gamma1}]{\includegraphics[width=0.9\textwidth]{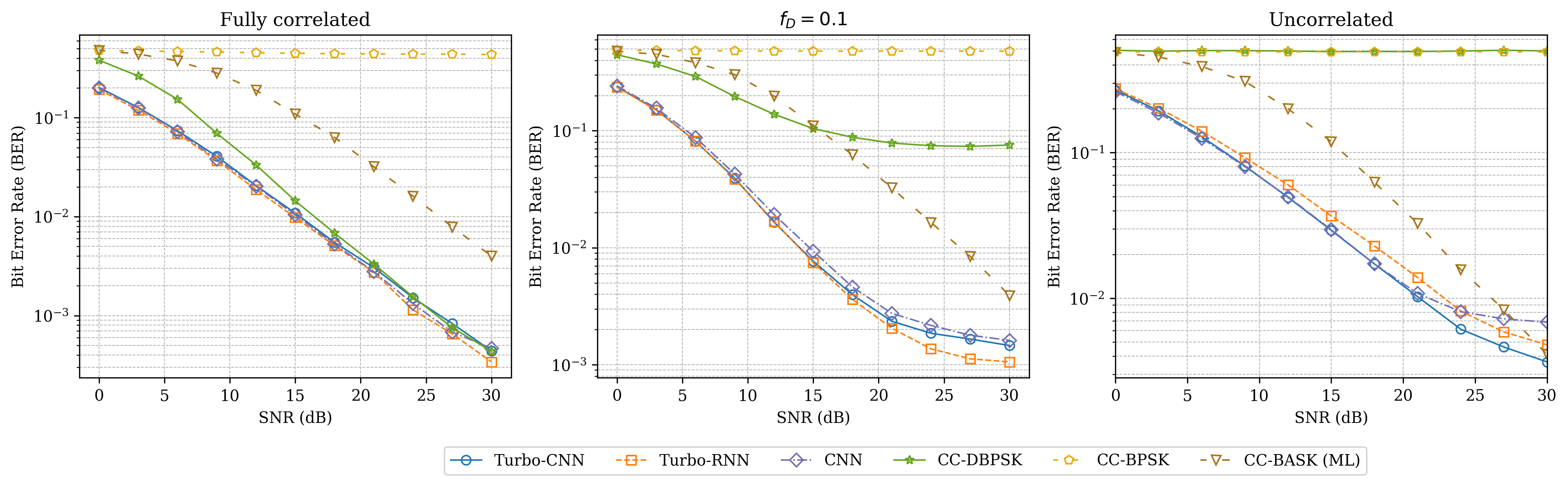}}
\\ 
    \subfloat[\color{black}Robustness to mismatched channel correlation. Each curve corresponds to a model trained at a specific $f_{\rm D}$ value, as indicated in the legend.\label{Fig_robustness_test}]
    {\centering \includegraphics[width=0.3\textwidth]{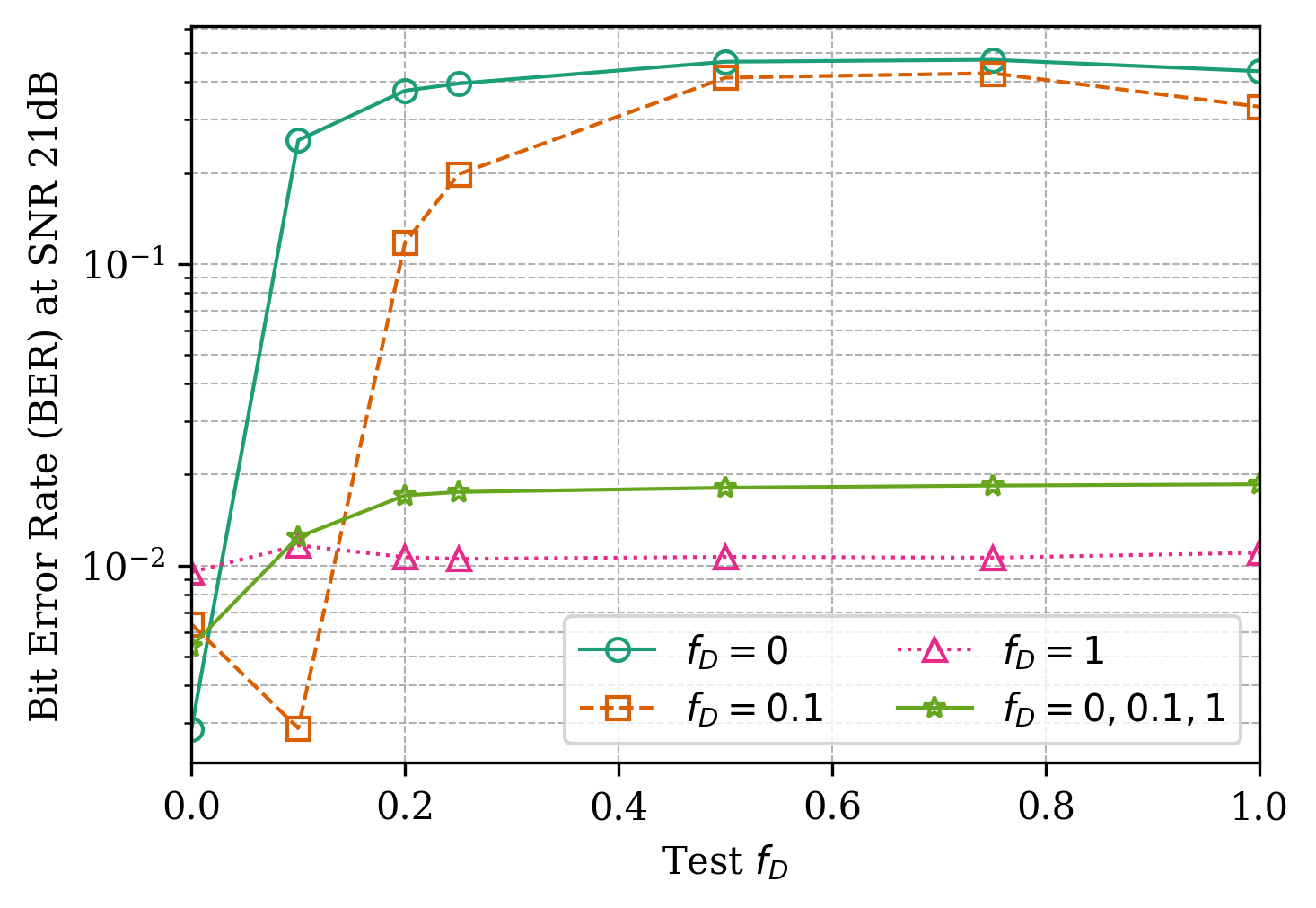}} \hspace{0.1cm}
    \subfloat[\color{black}MCRBs of the codeword obtained by the CNN trained for sensing, compared with baseline sequences. \label{fig:MCRB_K6N9_rho0.0}]{\includegraphics[width=0.3\textwidth]{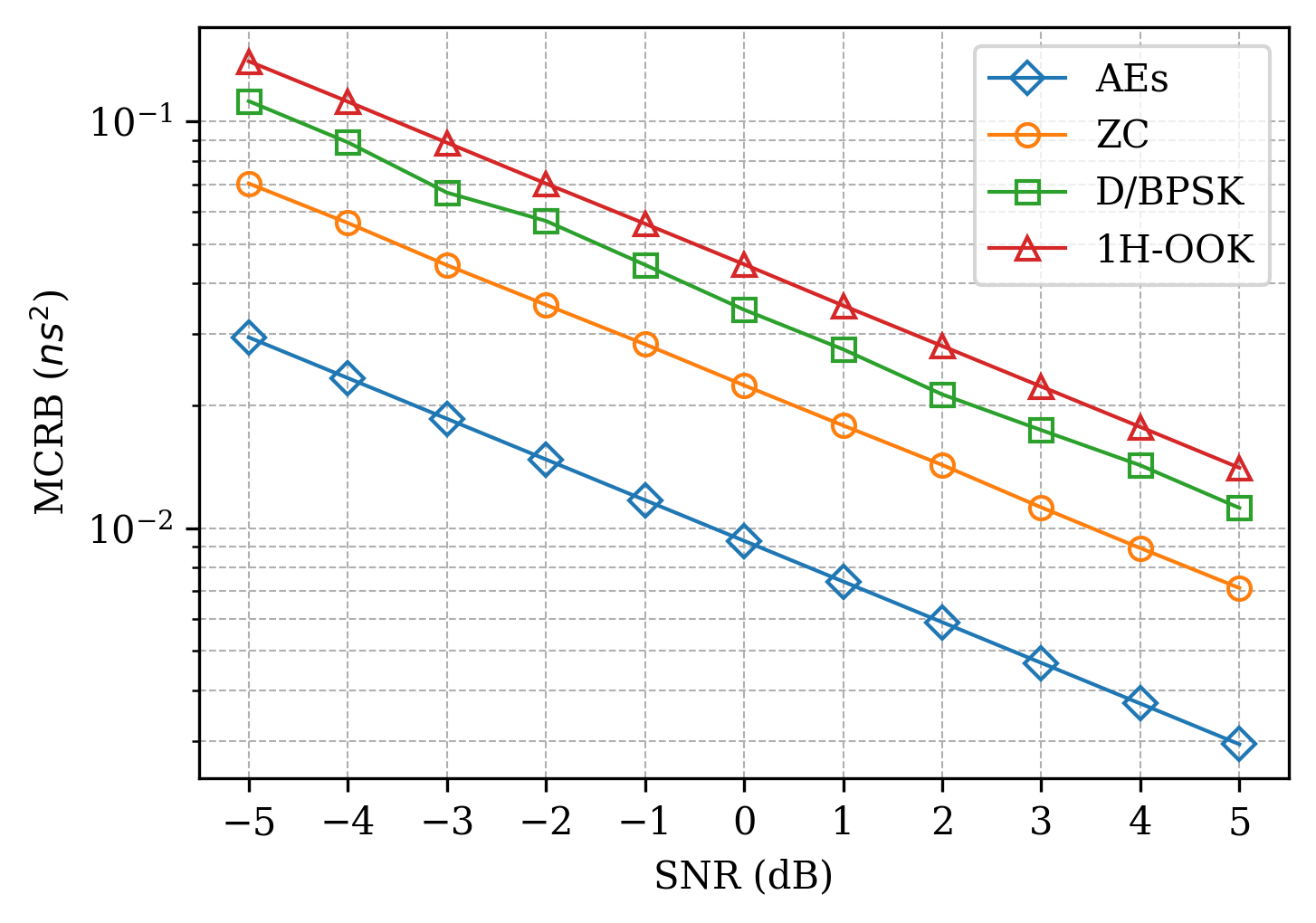}}
\hspace{0.1cm}
    \subfloat[{\color{black}MSE and non-outage MSE of delay estimation using a matched filter-based detector and for a single-target scenario, compared with the MCRB.}\label{fig:MCRB_K6N9_P1_MSE}]{\includegraphics[width=0.32\textwidth]{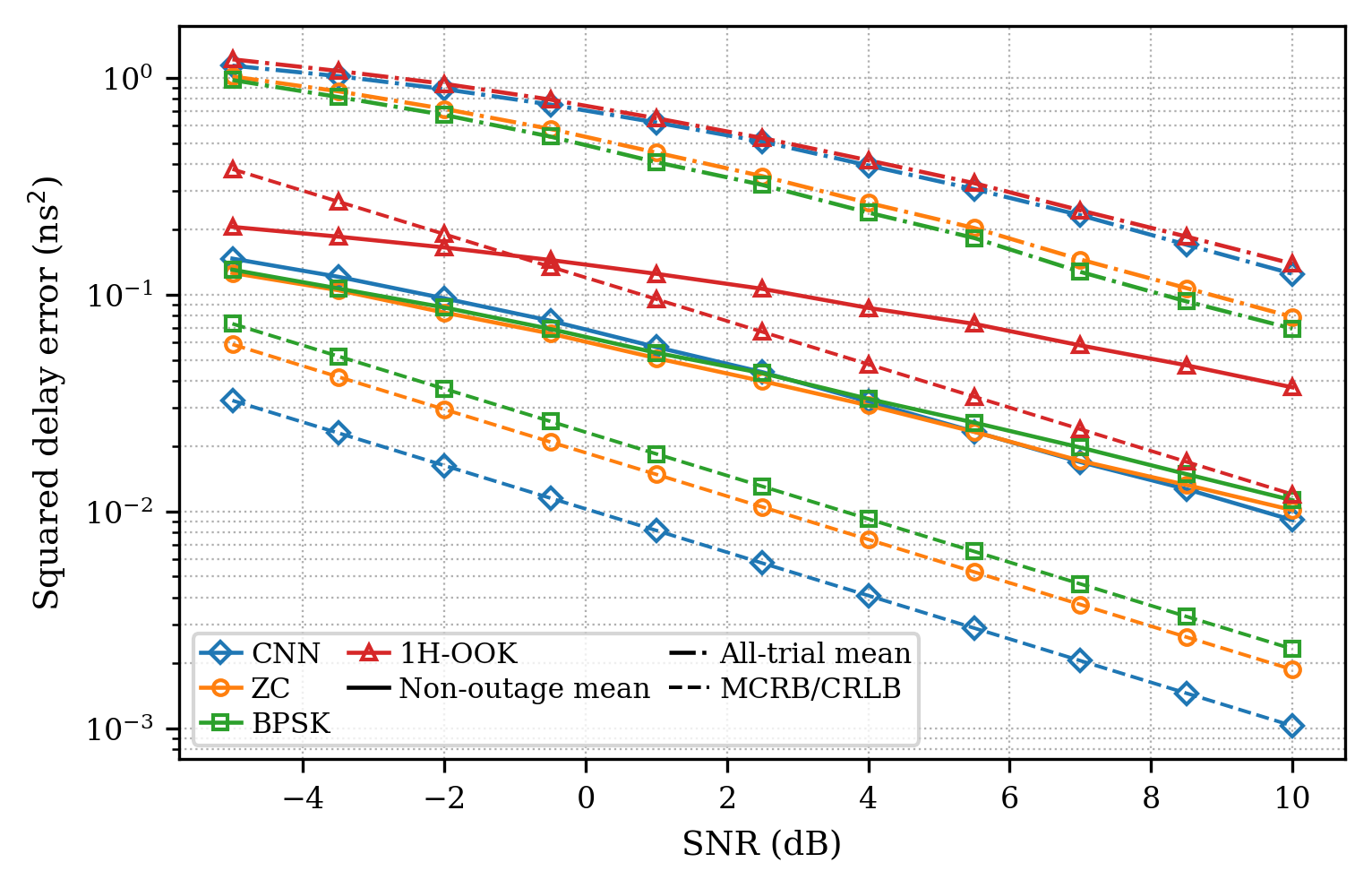}}
\\
\centering
\subfloat[Constellations of sensing-optimal codewords learned by the CNN.\label{fig:LinePlot_K6N9_gamma0}]{\includegraphics[width=2\columnwidth]{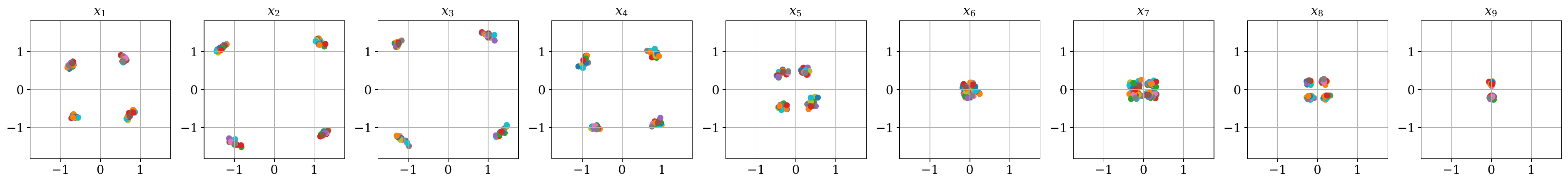}}
\caption{Communication- and sensing-optimal designs are analyzed through BER, MCRB, and constellation visualizations for $K=6$, $N=9$. The MSE result is additionally reported under a simplified single-target scenario ($P=1$) to provide empirical validation.}\label{fig:BER_K6N9_rho1.0}
\end{figure*}

\section{Validation using a Very Short Block Length}\label{sec:sim_toy_example}
This section presents code design results for a representative short blocklength setting. We study communication-, sensing-, and joint-optimal designs to understand the trade-offs between the two functionalities.

For communication, we evaluate NN-based encoders under fully, partially, and uncorrelated fading channels, and compare them with classical coding and modulation schemes. For sensing, we numerically investigate delay estimation performance by using the MCRB and a complementary MSE-based evaluation and analyze the structure of the learned waveforms. Finally, for joint design, we examine how the proposed framework balances communication and sensing objectives through a weighted loss, revealing the achievable trade-off region.

For clarity, we consider a small-scale example with $K=6$ and $N=9$. The NN feature size is set to 16, and the models are trained over SNR uniformly sampled in $[0,30]$~dB. Every model is trained for \{50, 100\} cycles of iterative training with learning rate $\{10^{-4}, 10^{-5}\}$, respectively. {\color{black}In our numerical results we use a Nyquist-root pulse shape  $r(t)=\sinc(t/T_{\rm s})=\frac{\sin{\pi t/T_{\rm s}}}{\pi t /T_{\rm s}}$}

\subsection{Communication Optimal Design}\label{ssec:sim_results_very_short_commopt}
{\color{black}Fig.~\ref{fig:BER_K6N9_gamma1} shows the BER of various AE-based designs and classical channel coding and modulation schemes under three correlation scenarios. The middle figure corresponds to the partially correlated case with $f_{\rm D}=0.1$ in~\eqref{eq_ACF}. The AE is trained separately for each scenario, as the optimal code design depends on the correlation. This dependence is further supported by the robustness test presented in Fig.~\ref{Fig_robustness_test}.

As a classical approach, we use a convolutional code (CC) of rate $2/3$ with generator matrix 
\begin{equation}
G = \begin{bmatrix}
7 & 5 & 0 \\
0 & 6 & 3
\end{bmatrix}_8.
\end{equation}
The CC is well suited for short blocklength transmission due to its low encoding and decoding latency. We implement a terminated CC using the CommPy simulator \footnote{Due to simulator constraints for short lengths, the implementation does not exactly match the considered setting: We use a message length of 24 and a code length of 36. Including 6 termination bits and 3 padding zeros, each codeword spans 45 symbols and is transmitted over 5 blocks of length 9.}.
In contrast, modern coding schemes such as LDPC and polar codes are typically designed for longer blocklengths and often rely on coherent detection or pilot-assisted channel estimation, making them less suitable for the considered short-blocklength noncoherent scenario.

As classical modulation schemes, we consider differential BPSK (DBPSK), BPSK, and binary ASK (BASK). Note that DBPSK requires one reference symbol and  uses a codeword length of $N=10$, while the other schemes use $N=9$.  
The BPSK is detected using a nearest-neighbor rule. In the considered noncoherent setting, the likelihood is invariant to a global phase rotation, leading to an inherent ambiguity between $\mathbf{x}$ and $-\mathbf{x}$ for PSK, which results in an error floor. Meanwhile, BASK is evaluated using the ML detector. 

The three AE architectures exhibit similar BER performance across all scenarios, outperforming the conventional approaches combined with CC in most cases. In the fully correlated case, the AE-based designs achieve nearly identical BER to CC-DBPSK, indicating that they approach the known optimal performance in this regime.  
For partially and uncorrelated fading, the AEs provide significant performance gains over the classical schemes. Among the AEs, the turbo AEs show slightly better performance in the high-SNR regime, while the CNN-based design achieves comparable performance with significantly lower complexity, as it avoids recurrent and iterative structures. The non-zero BER floor at high SNR is a well-known characteristic of noncoherent communication systems, caused by the ambiguity introduced by unknown channel coefficients and random phase rotations~\cite{proakis2008digital,simon2005digital}.

The DBPSK performs well under fully correlated fading at high SNR, but its performance degrades as the channel correlation decreases, as shown in the middle figure, because of error propagation in differential decoding.  
The performance of BPSK and BASK shows less sensitivity to channel correlation, as they operate symbol-wise. The BASK performs relatively better in the uncorrelated high-SNR regime. 

In summary, theA AEs are particularly advantageous in low- and moderate-SNR regimes, especially when channel correlation is weak, where analytical solutions are difficult to obtain.

\begin{figure*}[t!]
\centering
\subfloat[\color{black}BER of the CNN for multiple joint training weight $\gamma$ evaluated for three channel correlation cases.]{
\includegraphics[width=0.32\textwidth]{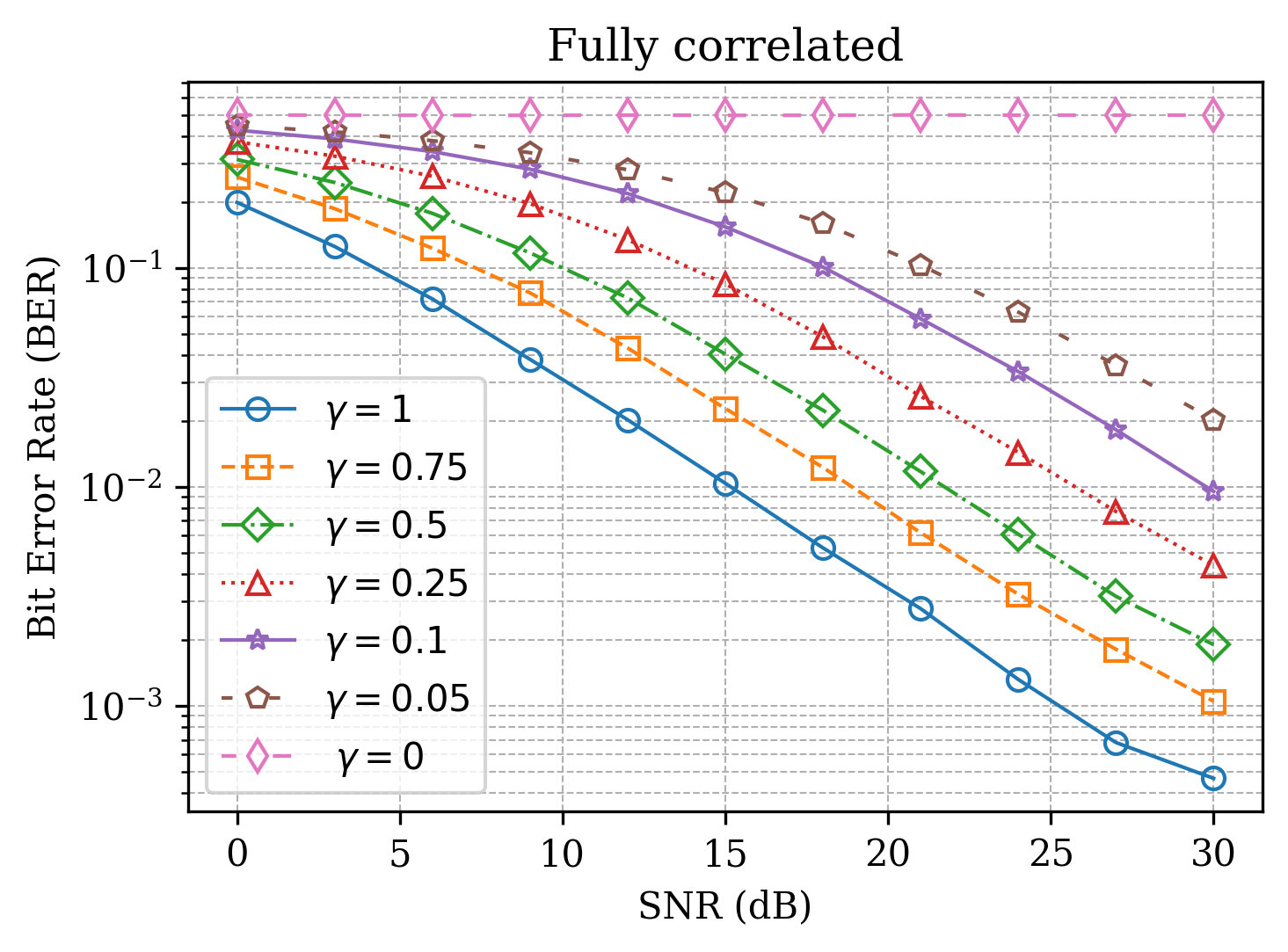} 
\includegraphics[width=0.32\textwidth]{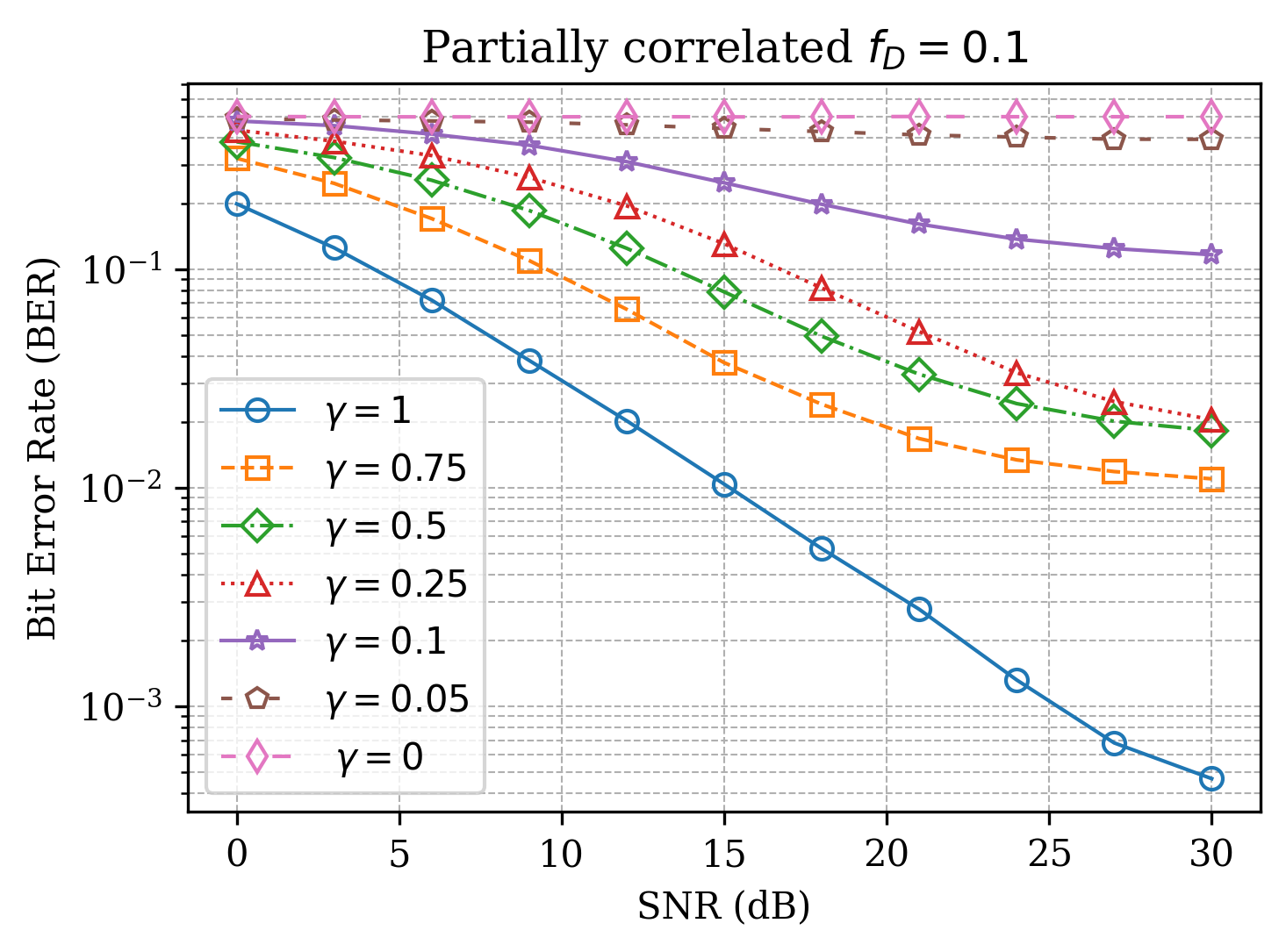}  
\includegraphics[width=0.32\textwidth]{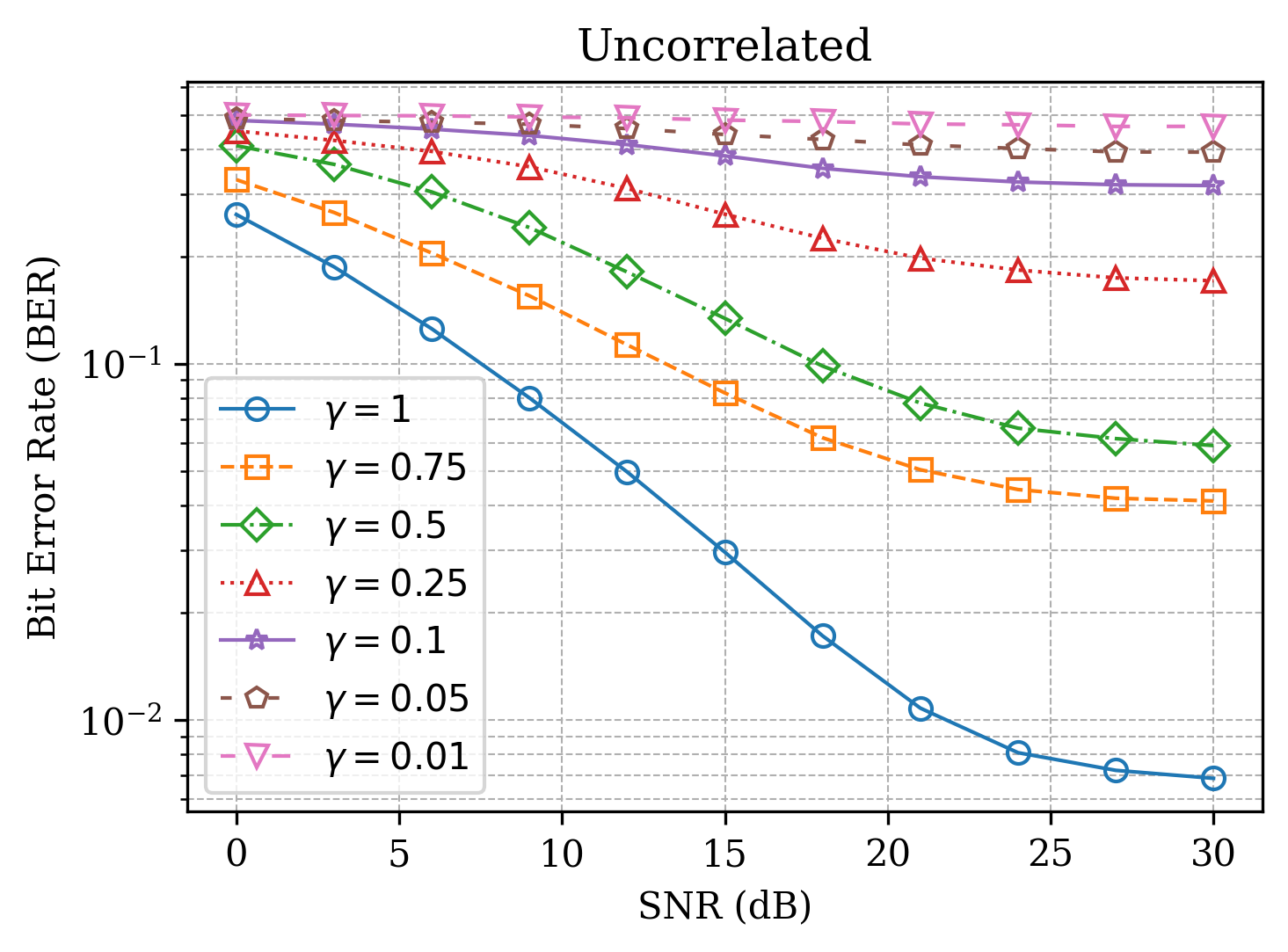} } \\
\centering
\subfloat[{\color{black}MCRB values of the CNN at an SNR of $0$~dB.Due to the linear dependence of the MCRB on SNR, a single representative value is sufficient to characterize the trend.}]{
{\footnotesize
\begin{tabular}{|c|c|c|c|c|c|c|c|c|}
    \hline
    \multicolumn{2}{|c|}{$\gamma$} & 0 & 0.05 & 0.1 & 0.25 & 0.5 & 0.75 & 1\\
    \hline
    \multirow{3}{*}{$\frac{1}{T^2_{\rm s}}$ MCRB (dB)} & $f_{\rm D}=0$ & \multirow{3}{*}{\textbf{\color{black}-20.32}}  & {\color{black}-20.27} & {\color{black}-20.27} & {\color{black}-20.18} & {\color{black}-20.04} & {\color{black}-19.79} & {\color{black}-14.91}\\
    \cline{2-2} \cline{4-9}
     &$f_{\rm D}=0.1$ &  &  {\color{black}-20.32} & {\color{black}-20.27} & {\color{black}-20.22} & {\color{black}-19.96} & {\color{black}-19.47} & {\color{black}-14.61}\\
    \cline{2-2} \cline{4-9}
     &$f_{\rm D}=1$ &  & {\color{black}-20.32} & {\color{black}-20.32} & {\color{black}-20.22} & {\color{black}-19.96} & \color{black}{-19.21} & {\color{red}-10.70} \\
    \hline
\end{tabular}}}
    
\caption{BER and MCRB of the CNN for $N=9$ are measured with varying $\gamma$.
}\label{fig:joint_design_N9}
\end{figure*}

Fig.~\ref{Fig_robustness_test} evaluates the robustness of the learned codebooks under mismatched channel correlation, i.e., when the normalized Doppler value $f_{\rm D}$ used for training differs from that for testing. 
We train three separate CNN models for $f_{\rm D} \in \{0, 0.1, 1\}$, respectively, and an additional CNN model using a mixed dataset that randomly samples these three channel conditions during training. The resulting codebooks are evaluated over a range of $f_{\rm D}$ values at an SNR of $21$~dB.

The results show that the BER performance is in general more robust when the CNN is trained with higher Doppler scenario, and the lowest BER is achieved when $f_{\rm D}$ is matched for training and testing. The model trained on the mixed dataset improves the worst-case performance but yields consistently suboptimal BER across all tested values. These observations indicate that the optimal code design is strongly dependent on the channel correlation characterized by $f_{\rm D}$.}

\begin{figure*}[h!]
\centering
\subfloat[BER of the CNN trained with various joint loss weight $\gamma$, compared to the best baseline of each case.]{
\includegraphics[width=0.32\textwidth]{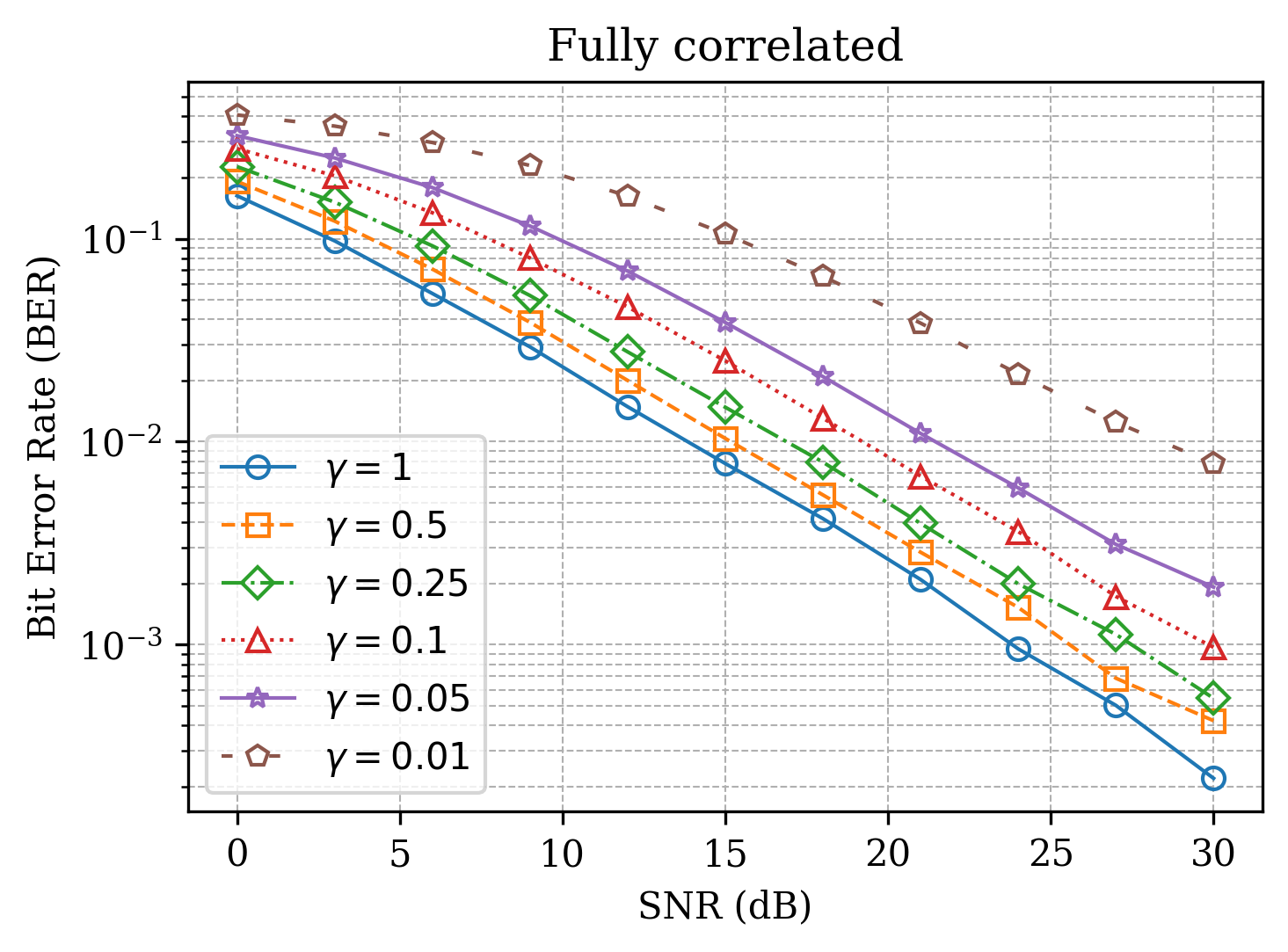}
\includegraphics[width=0.32\textwidth]{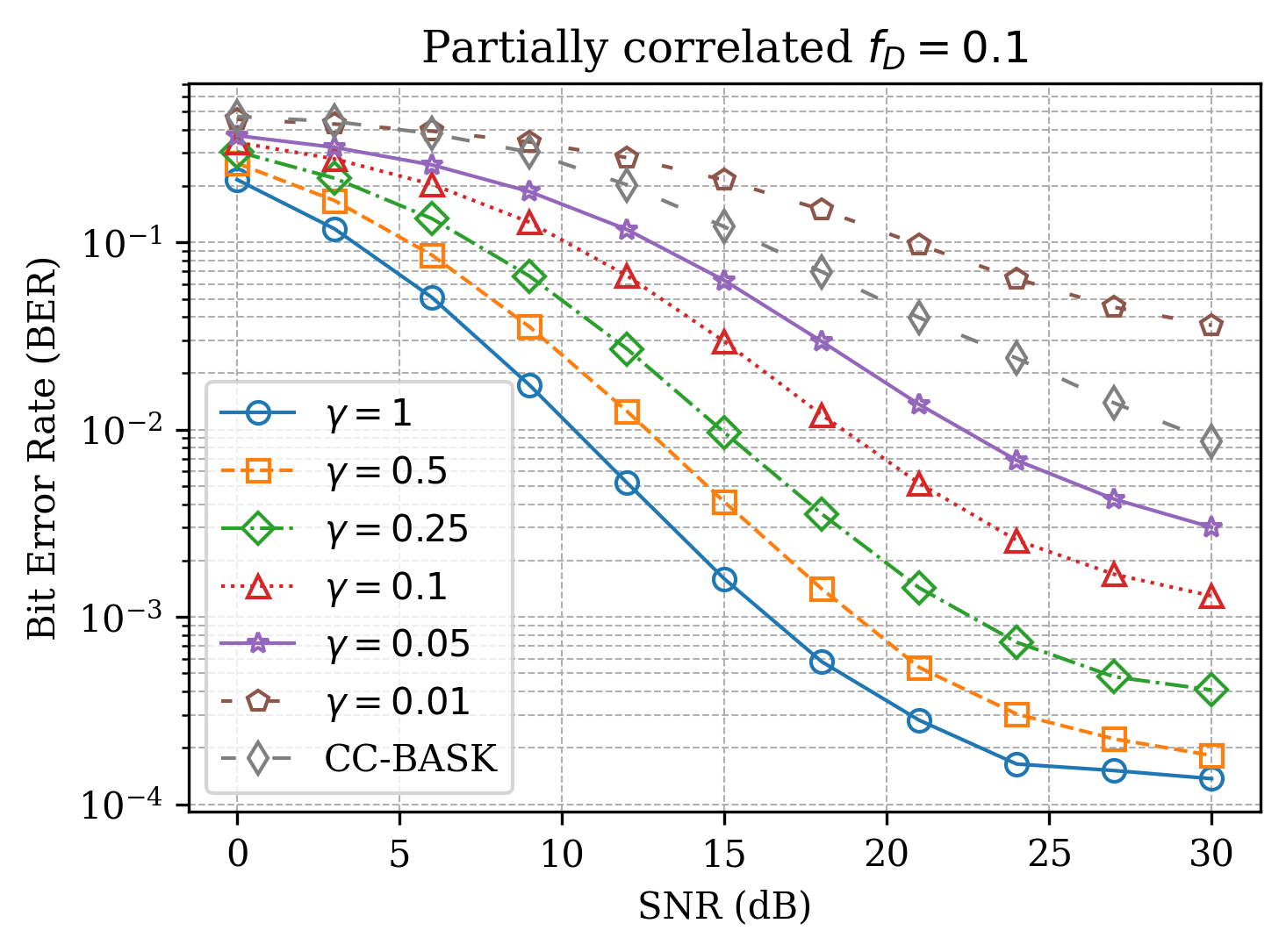}
\includegraphics[width=0.32\textwidth]{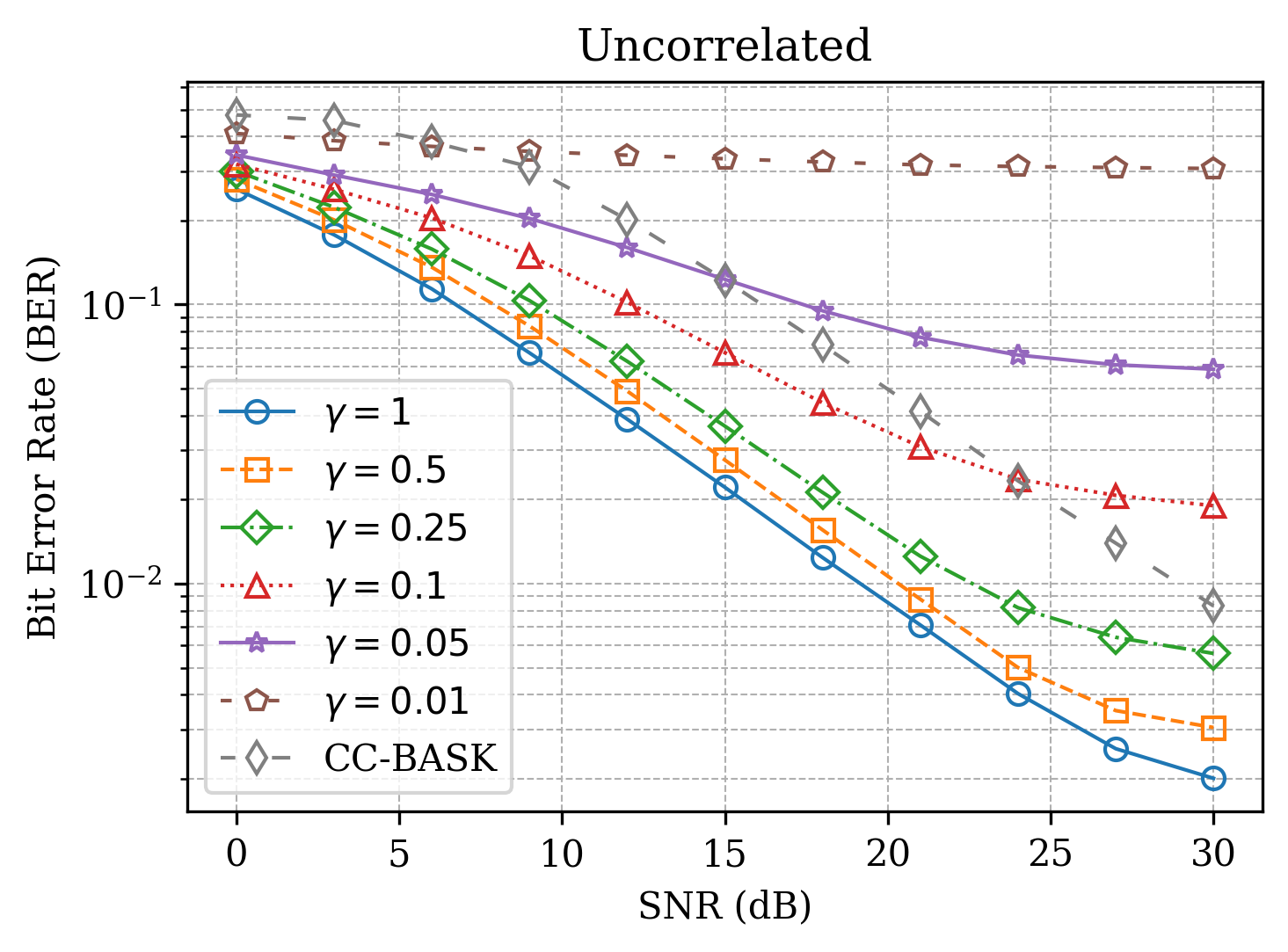}
}

\subfloat[MCRB values of the CNN at an SNR of $0$~dB. Values are reported in dB scale with an offset of $-30\,$dB applied for readability. {\color{black}Due to the linear dependence of the MCRB on SNR, a single representative value is sufficient to characterize the trend.}]{\footnotesize 
\begin{tabular}{|c|c|c|c|c|c|c|c|c|}
    \hline
    \multicolumn{2}{|c|}{$\gamma$} & 0 & 0.01 & 0.05 & 0.1 & 0.25 & 0.5 & 1 \\
    \hline
    
    \multirow{2}{*}{$\frac{1}{T^2_{\rm s}}$MCRB} & $f_{\rm D}=0$ 
    & \multirow{3}{*}{\textbf{\color{black}-1.67}} & {\color{black}-1.55} & {\color{black}-1.43} & {\color{black}-1.31} & {\color{black}-1.02} & {\color{black}-0.66} & 2.88 \\ 
    
    \cline{2-2} \cline{4-9} 
     &$f_{\rm D}=0.1$ 
     &  & {\color{black}-1.55} & {\color{black}-1.37} & {\color{black}-1.19} & {\color{black}-0.76} & {\color{black}-0.09} & {\color{black}2.70} \\

    \cline{2-2} \cline{4-9} 
     (dB, rel. –30 dB) &$f_{\rm D}=1$ 
     &  & {\color{black}-1.61} & {\color{black}-1.25} & {\color{black}-1.02} & {\color{black}-0.60} & {\color{black}-0.13} & {\color{red}2.99} \\
    \hline
\end{tabular}
}
\caption{BERs and MCRBs of the CNN model for $N=90$, measured for varying $\gamma$ values for joint design. 
}\label{fig:joint_design}
\end{figure*}
\subsection{Sensing Optimal Design}
Fig.~\ref{fig:MCRB_K6N9_rho0.0} illustrates the MCRB of the codewords optimized by the AEs and D/BPSK (both DBPSK and BPSK). We included a Zadoff-Chu (ZC) sequence of length $N$, i.e., $x_u[n]=\exp\left(-j\frac{\pi u n(n+c_{\rm f})}{N}\right)$, where $c_{\rm f}=N \mod 2$ and $u=1$. The one-hot encoded on-off keying (1H-OOK) is also tested, where each sequence has $(N-1)$ zeros and one non-zero signal with power $\sqrt{N}$. The ZC and 1H-OOK use 1 and $N$ codewords, respectively, which offer much less communication rates when they are used for communications. 
The BASK is excluded because the MCRB value can explode due to symbol-wise power normalization.
For simulations, we consider $P=5$ paths, and their delay ranges are set by $L_{\rm s}=\{1, 2, 3, 5, 7\}$ and $T_{\rm s}=1$~ns. 
The MCRB values are averaged over the delay distributions of all paths and for random input bit input sequences. 

The MCRBs of the AEs are all identical and the lowest, followed by the ZC, D/BPSK, and 1H-OOK. 
We suppose that the sensing-optimal design is easier to obtain than the communication-optimal design because all AEs achieve the optimal performance without slight differences. 

{\color{black}To empirically validate the MCRB-based performance evaluation, Fig.~\ref{fig:MCRB_K6N9_P1_MSE} reports the delay-estimation MSE obtained with a matched-filter-based detector. The detector evaluates a correlation score between the received signal and a single-path template over hypothesized delays. We consider a single-target setting with $P=1$ and $L_{\rm s}=3$.

Due to block fading, deep fading realizations can lead to outage events with very small effective channel gain, as reflected by the all-trial mean curves. To isolate the non-outage estimation behavior and enable a clearer comparison among sequences, we also report a genie-aided non-outage mean. In this evaluation, a trial is classified as an outage if the absolute delay-estimation error exceeds $T_{\rm s}$; the remaining trials are used to compute the curves labeled as non-outage mean. The MCRB and CRLB curves are computed from \eqref{eq_CRLB_FIM} and \eqref{eq_MCRB}, respectively.

For the single-target case, the MCRB and CRLB curves overlap, which is consistent with the theoretical derivation. The non-outage MSE showed similar performance for the CNN, ZC, and BPSK, whereas the 1H-OOK has noticeably higher values. A non-negligible gap remains between the simulated non-outage MSE and the corresponding bounds for all sequences, which can be attributed to the short block length and the practical resolution limits of the detector.

We illustrate the learned codewords in Fig.~\ref{fig:LinePlot_K6N9_gamma0}, where each subfigure shows the I/Q representation of a codeword consisting of $N=9$ symbols, and different colors correspond to different codewords.

The codewords exhibit PSK-like structures with varying amplitudes across symbols. The phase ambiguity arises from the rotational invariance discussed in Sect.~\ref{ssec:heuristic_understanding_sensing}. 
The non-uniform amplitude across symbols reflects the sensing-driven optimization, where power is allocated unevenly over time to improve estimation performance, rather than enforcing constant-modulus (CM) constraints. 

We note that CM waveforms are often preferred in practice due to their favorable PAPR and power amplifier (PA) efficiency. The unconstrained design adopted here may therefore incur practical costs due to nonlinear distortion. However, this work focuses on the signal design aspect and allows the AE to learn the waveform freely. Incorporating hardware-aware constraints is an interesting direction for future work.
}

\begin{figure}
\centering\includegraphics[width=0.35\textwidth,valign=t]{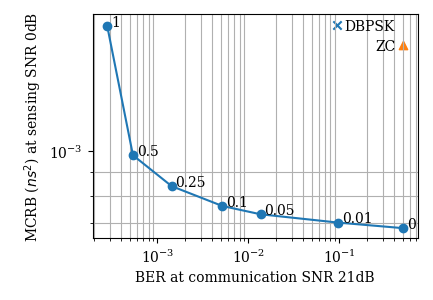}
\caption{\color{black}Pareto trade-off between BER and MCRB for different values of the joint loss parameter $\gamma$, evaluated with the CNN model for $K=60, N=90$ under a partially correlated channel with $f_{\rm D}=0.1$. Each circular marker on the curve corresponds to a specific $\gamma$ value, whereas a cross and a triangle marker show DBPSK and ZC, respectively.}\label{fig:pareto_curve_f01}\end{figure}

\subsection{Joint Design}\label{Sect_short_joint_design}
We test the joint design performance of the AE by using only the CNN model because it performs well with relatively lower complexity than the other AEs. 
For the test, we use the same communication and sensing channel as above.
Fig.~\ref{fig:joint_design_N9} shows the BER of the AE for varying $\gamma$ values at SNR between 0 to 30~dB and the MCRBs at 0~dB SNR\footnote{Remark that the noise power for the communication and sensing channels are distinct, and the SNR can be accordingly different.}. Since MCRB values are very small, we visualize $\frac{1}{T_{\rm s}}$MCRB for readability. To compensate the small scale of the sensing loss, we introduce additional scaling to the sensing loss of 1000. 
{\color{black}The MCRB decreases linearly with SNR, with a slope determined by $1/N_{0,\rm s}$. Therefore, a single representative value is sufficient to characterize its behavior across SNR, and we report the results at a reference SNR of $0$~dB.}
For comparison, the MCRBs of ZC, BPSK, and 1H-OOK are observed to be $\{-16.48, -14.55, -13.52\}$~dB, respectively. The table is color-coded according to these reference values: blue indicates performance better than ZC, black indicates values between ZC and BPSK, and red indicates values worse than BPSK.
 
{\color{black}In all three cases of $f_{\rm D}$, a clear trade-off between BER and MCRB is observed as the weighting parameter $\gamma$ varies, which becomes more pronounced under less correlated fading.

\section{Simulation Results for a Moderately Short Block Length}\label{sec:sim_longer_blocklength}
In this section, we focus on the case $N=90$ and $K=60$ to design a moderately short code for practical scenarios. We present the performance of the CNN model, since we observe that the CNN architecture achieves comparable performance while having lower implementation complexity.
For the simulations in this section, we use the feature size of $60$ in all AE models and the same setting for SNR, epochs, learning rates, and the scaling of MCRB in the joint loss as in Sect.~\ref{sec:sim_toy_example}.

Fig.~\ref{fig:joint_design} shows the BERs and MCRBs of the CNN model for various $\gamma$ values. For communication channels, we use three scenarios of fading correlation, $f_{\rm D}=0, 0.1, 1$. For sensing channel, we trained and tested with four paths, $P=4$, with $L_{\rm s}=5,10,15,20$. For the ZC sequence, BPSK, and the 1H-OOK, the MCRBs are measured as $\{2.30, 2.74, 5.99\}$~dB relative to –30 dB, respectively. The MCRB values in the table are color-coded following the same rule as used in Sect.~\ref{Sect_short_joint_design}.

The tendency looks similar to that in Fig.~\ref{fig:joint_design_N9} where BERs become gradually larger as $\gamma$ decreases, and the trade-off is stronger for less correlation in fading. The MCRBs of the CNN also grow as $\gamma$ increases, but they are lower than that of ZC sequence for all $\gamma<1$, which are colored blue. 

The results show that the CNN with joint loss effectively balances sensing and communication through the control parameter $\gamma$, achieving operating points along the trade-off curve. These observations are consistent with prior work~\cite{kim2024short}. Compared to the shorter blocklength case ($N=9$ in Fig.~\ref{fig:joint_design_N9}), the trade-off is milder, and joint designs remain close to both sensing- and communication-optimal performance. For instance, $\gamma=0.5$ achieves BER close to the optimal curve while maintaining a lower MCRB than the ZC sequence.

This trade-off is illustrated by the Pareto curve in Fig.~\ref{fig:pareto_curve_f01}, where the operating points form a monotonically decreasing trend. The results indicate a continuous transition between sensing- and communication-oriented designs as $\gamma$ varies. In contrast, DBPSK and ZC, corresponding to single points marked by an `x' and a triangle, lie outside the Pareto curve, reflecting less efficient trade-offs under the considered setting.}

\section{Conclusion}
This paper proposed an autoencoder-based framework for joint waveform design in ISAC, targeting short blocklength and noncoherent communication scenarios. An MCRB for multi-target delay estimation was derived and incorporated into the training objective, enabling joint optimization of communication and sensing performance.

The results show that the learned waveforms outperform conventional designs and provide a controllable trade-off between BER and sensing accuracy through a single parameter. The proposed approach is particularly effective in regimes where classical design is challenging, such as short blocklength and partially or uncorrelated fading.

Overall, the framework demonstrates that data-driven signal design can effectively address the coupled communication–sensing optimization problem and offers a flexible tool for ISAC waveform design under practical constraints.
\appendix

\subsection{Proof of Theorem~\ref{thm_MCRB}}\label{proof_thm_MCRB}
We obtain the corresponding FIM and derive the MCRB from it. To efficiently represent the Fischer information matrix as in~\cite{rogalin2014scalable}, we first solve the integral of each element of the channel matrix ${\bf H}_{\rm s}$ in \eqref{rad_g_m_n} {\color{black}for an arbitrary Nyquist root pulse $r(t)$ in the following lemma. 

\begin{lemma}\label{lemma:g_m-n}
Let the $(m,n)$-th element of channel matrix ${\bf H}_{\rm s}$ be defined as in~\eqref{rad_g_m_n}.
For an energy-normalized Nyquist root pulse such that $R(f) = \sqrt{P(f)}$ for a Nyquist pulse $P(f)$ and $\int_{\infty}^{\infty}|r(t)|^2dt=1$, the channel matrix ${\bf H}_{\rm s}$ can be obtained by its elements satisfying
\begin{align}
g_{m-n} = \sum_{i=1}^{P} h_{\rm s}^{(i)} p\left((m-n)T_{\rm s}-\tau^{(i)}\right).\label{eq_g_m-n}
\end{align}
\end{lemma}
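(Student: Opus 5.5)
The plan is to compute the correlation integral in \eqref{rad_g_m_n} for a single target and then extend to all $P$ targets by linearity. Since $g_{m-n}$ is a finite sum over $i$ with coefficients $h_{\rm s}^{(i)}$ that do not depend on $t$, it suffices to show that for any fixed $\tau$,
\begin{align}
I(\tau) \coloneq \int_{-\infty}^{\infty} r\left(t-nT_{\rm s}-\tau\right) r^*\left(t-mT_{\rm s}\right){\rm d}t = p\left((m-n)T_{\rm s}-\tau\right).
\end{align}

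\textbf{Step 1: reduce to an autocorrelation.} Substitute $u = t - mT_{\rm s}$, which turns $I(\tau)$ into
\begin{align}
I(\tau) = \int r\left(u+(m-n)T_{\rm s}-\tau\right) r^*(u)\,{\rm d}u .
\end{align}
With $s \coloneq (m-n)T_{\rm s}-\tau$, this is $\phi(s) = \int r(u+s)r^*(u)\,{\rm d}u$, the deterministic autocorrelation of $r$ evaluated at lag $s$.

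\textbf{Step 2: evaluate the autocorrelation in frequency.} Apply Parseval, equivalently the Wiener–Khinchin relation for finite-energy signals. This gives
\begin{align}
\phi(s) = \int |R(f)|^2 e^{j2\pi f s}\,{\rm d}f .
\end{align}
By hypothesis $R(f)=\sqrt{P(f)}$, so $|R(f)|^2 = P(f)$. Taking $P(f)$ real and nonnegative, as is standard for the Nyquist spectrum underlying a root pulse, the integral is the inverse Fourier transform of $P$, namely $p(s)$. Hence $I(\tau)=p\left((m-n)T_{\rm s}-\tau\right)$. Substituting into \eqref{rad_g_m_n} and summing over $i$ yields \eqref{eq_g_m-n}.

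\textbf{Consistency check.} The energy normalization $\int|r|^2=1$ gives $p(0)=1$, and the Nyquist property gives $p(kT_{\rm s})=\delta_k$. So when $\tau^{(i)}$ is an integer multiple of $T_{\rm s}$, ${\bf H}_{\rm s}$ reduces to shifted identities, which is the expected behavior.

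\textbf{Main obstacle.} The delicate point is justifying the sign and conjugation convention. The autocorrelation equals $p(s)$, and not $p(-s)$ or $p^*(s)$, only under mild symmetry. I would handle this by noting that $P(f)$ is real and nonnegative, so $p$ is Hermitian, $p(-s)=p^*(s)$. For real, even pulses such as the $\sinc$ pulse used later, $p$ is real and even, and the ordering of arguments does not matter. I would state this assumption explicitly. I would also note that all interchanges of integrals are justified because $r\in L^2$.
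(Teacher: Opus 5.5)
Your proof is correct and is essentially the paper's argument: a change of variables reduces the integral to the autocorrelation of $r$ at lag $(m-n)T_{\rm s}-\tau^{(i)}$, and Parseval with $|R(f)|^2=P(f)$ identifies it as $p$ at that lag. The paper substitutes $t'=t-nT_{\rm s}-\tau^{(i)}$ rather than $u=t-mT_{\rm s}$, which makes no difference. Your sign bookkeeping is cleaner than the paper's, whose exponent has a sign slip that is harmless only when $P(f)$ is even; your Step 2 already gives $p(s)$ exactly under $P(f)\ge 0$ alone, so the symmetry caveat in your last paragraph is unnecessary.
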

\proof 

For arbitrary $\Delta t$,  we have
\begin{align}
\int_{-\infty}^{\infty} r(t) r^*(t + \Delta t) dt = \int_{-\infty}^{\infty} |R(f)|^2 e^{-j2\pi f \Delta t} df. 
\end{align}
With $t' \coloneqq t-nT_{\rm s}-\tau^{(i)}$, the integral can be solved as
\allowdisplaybreaks
\begin{align}
& \int_{-\infty}^{\infty} r(t - nT_{\rm s} - \tau^{(i)}) r^*(t - mT_{\rm s}) dt\\
= & \int_{-\infty}^{\infty} r(t') r^*(t' +nT_{\rm s} - mT_{\rm s} + \tau^{(i)}) dt\\
= & \int_{-\infty}^{\infty} \{R(f)\}^2 e^{ -j2\pi f ( (m-n)T_{\rm s} - \tau^{(i)}) } df\\
= & \mathcal{F}^{-1}\{P(f)\}\left((m-n)T_{\rm s} - \tau^{(i)}\right)\\
= & p\left((m-n)T_{\rm s} - \tau^{(i)}\right).
\end{align}
 Substituting the integral in \eqref{rad_g_m_n} completes the proof.
\qedhere

The integral in each term becomes a Nyquist pulse function that has the peak, when $(m-n)T_{\rm s}$ is equal to the delay $\tau^{(i)}$. The contribution of each path is delayed by $\tau^{(i)}$ and spread out in the shape of the Nyquist pulse.} 
In the next lemma, we introduce a decomposed form of ${\bf H}_{\rm s}{\bf x}$ as used in~\cite{rogalin2014scalable}.

\begin{lemma}\label{lemma:newform}
For ${\bf H}_{\rm s}$ 
defined in~\eqref{eq_g_m-n}, ${\bf H}_{\rm s}{\bf x}={\bf X}'\bm\Gamma(\bm\tau) \bf{h}_{\rm s}$ holds, where $\bf{X}'$ is an $N\times (2N-1)$ matrix satisfying
\begin{align}
{\bf{X}'} &=
\begin{bmatrix}
    x_{N} & x_{N-1} & x_{N-2} & \cdots      & \phantom{0} & 0       & 0\      \\
    0     & x_{N}   & x_{N-1} & \cdots      & \phantom{0} &  0      & 0\      \\
    0     & 0       & x_{N}   & \cdots      & \phantom{0} &  0      & 0\      \\
    \vdots&  \vdots &\cdots   & \phantom{0} & x_2         &  x_{1}  & 0\      \\
    0     & 0       &\cdots   & \phantom{0} & x_3         &  x_{2}  &  x_{1}\ \\
\end{bmatrix},\\
{\bf{h}}_{\rm s} &= [h_{\rm s}^{(1)}, h_{\rm s}^{(2)}, \cdots,  h_{\rm s}^{(P)}]^{\rm T},
\end{align}
${\bf{x}}=(x_1, x_2, \dots, x_N)^{\rm H
}\in\mathbb{C}^{N\times 1}$, and $\bm\Gamma(\bm \tau)$ is a $(2N-1)\times P$ matrix, whose $(i,j)$-th element is ${\color{black}p\Big((i-N)T_{\rm s}-\tau^{(j)}} \Big)$. 
\end{lemma}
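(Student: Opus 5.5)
The plan is a direct entrywise verification. First, I use Lemma~\ref{lemma:g_m-n} to write the $m$-th entry of ${\bf H}_{\rm s}{\bf x}$. Then I pin down the entries of ${\bf X}'$ by a closed-form index rule. Finally, I show the $m$-th entry of ${\bf X}'\bm\Gamma(\bm\tau){\bf h}_{\rm s}$ reduces to the same double sum after a change of summation index.

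\emph{Step 1 (left-hand side).} Since ${\bf H}_{\rm s}$ is Toeplitz with $(m,n)$-th entry $g_{m-n}$ given by \eqref{eq_g_m-n}, I will write, for $m=1,\dots,N$,
\begin{align}
[{\bf H}_{\rm s}{\bf x}]_m=\sum_{n=1}^N g_{m-n}x_n=\sum_{i=1}^P h_{\rm s}^{(i)}\sum_{n=1}^N x_n\, p\big((m-n)T_{\rm s}-\tau^{(i)}\big).
\end{align}
The two finite sums are exchanged freely.

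\emph{Step 2 (structure of ${\bf X}'$).} I read off from the displayed banded matrix that
\begin{align}
[{\bf X}']_{m,k}=x_{m+N-k}\ \text{ if } 1\le m+N-k\le N,\quad [{\bf X}']_{m,k}=0 \text{ otherwise},
\end{align}
for $k=1,\dots,2N-1$. I will check this rule against the corners of the matrix:
\begin{itemize}
\item Row $1$, column $1$ gives $x_N$.
\item Row $1$, column $2$ gives $x_{N-1}$.
\item Row $N$, column $2N-1$ gives $x_1$.
\item Row $N$, column $2N-2$ gives $x_2$.
\end{itemize}
Each row is a shift by one of the previous row, which matches the rule.

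Regarding the definition ${\bf x}=(x_1,\dots,x_N)^{\rm H}$: I interpret the entries of ${\bf X}'$ as the same symbols $x_n$ that appear in ${\bf H}_{\rm s}{\bf x}$, i.e.\ no extra conjugation. This bookkeeping point must be stated explicitly so that both sides refer to the same $x_n$.

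\emph{Step 3 (right-hand side and reindexing).} Using $[\bm\Gamma(\bm\tau)]_{k,i}=p\big((k-N)T_{\rm s}-\tau^{(i)}\big)$, I obtain
\begin{align}
[{\bf X}'\bm\Gamma(\bm\tau){\bf h}_{\rm s}]_m=\sum_{i=1}^P h_{\rm s}^{(i)}\sum_{k=1}^{2N-1}[{\bf X}']_{m,k}\,p\big((k-N)T_{\rm s}-\tau^{(i)}\big).
\end{align}
I then substitute $n=m+N-k$, so that $k-N=m-n$. For fixed $m\in\{1,\dots,N\}$, as $k$ runs over $\{1,\dots,2N-1\}$, the nonzero terms are exactly those with $n\in\{1,\dots,N\}$. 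Every such $n$ corresponds to $k=m+N-n\in\{m,\dots,m+N-1\}\subseteq\{1,\dots,2N-1\}$. Hence the inner sum equals $\sum_{n=1}^N x_n\,p\big((m-n)T_{\rm s}-\tau^{(i)}\big)$, which matches Step 1 for every $m$ and proves the identity.

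The main obstacle I expect is the index bookkeeping in Step 3. I must confirm that the range $k\in\{1,\dots,2N-1\}$ is exactly wide enough that no term with $n\in\{1,\dots,N\}$ is lost and no spurious term appears. The range $k\in\{m,\dots,m+N-1\}$ does this for all $m$, and it is the reason ${\bf X}'$ has $2N-1$ columns.
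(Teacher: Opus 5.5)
Your entrywise verification is correct. With $[{\bf X}']_{m,k}=x_{m+N-k}$ on the band $m\le k\le m+N-1$ and the substitution $n=m+N-k$, both sides reduce to $\sum_{i}h_{\rm s}^{(i)}\sum_{n}x_n\,p((m-n)T_{\rm s}-\tau^{(i)})$. Reading the $(\cdot)^{\rm H}$ in the definition of ${\bf x}$ as a typo for $(\cdot)^{\rm T}$ is the right call. The paper states this lemma without a separate proof, citing \cite{rogalin2014scalable}. However, it uses the same entrywise expansion when it writes out $\partial{\bf m}(\bm\theta)/\partial\tau^{(a)}$ in the proof of Theorem~\ref{lemma_Fisher_element}, so your argument matches the paper's implicit one.
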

The representation separates the contribution of delay $\bm\tau$ and fading ${\bf h}_{\rm s}$ and is convenient for taking partial derivatives. The following theorem provides the FIM that is defined in~\eqref{eq_MCRB_} based on the formula above.
\begin{theorem}\label{lemma_Fisher_element}
    {\color{black}Let $\bm{\tau} = \Big( \tau^{(1)}, \tau^{(2)}, \cdots, \tau^{(P)}\Big)$ be the parameter vector of interest, and the random fading ${\bf h}_{\rm s}$ and codeword ${\bf x}$ are considered as nuisance parameters as represented in~\eqref{eq_MCRB_}.
    For efficient representation, we define $\bm \theta := (\bm\tau, {\bf h}_{\rm s}, {\bf x})$ and ${\bf m}(\bm\theta):={\bf X}'\bm\Gamma(\bm\tau){\bf h}_{\rm s}$.
    The $(a,b)$-th element of the corresponding FIM defined in~\eqref{eq_MCRB_element}} satisfies
    and the partial derivatives are as follows. 
\begin{align}
    [{\bf J}(\bm\theta)]_{a,b} &=\frac{2}{N_{0, \rm s}}\Re{\left(\left(\frac{\partial {\bf{m}}(\bm \theta)}{\partial\tau^{(a)}}\right)^{\!\!\rm H} \frac{\partial {\bf{m}}(\bm \theta)}{\partial \tau^{(b)}}\right)}\\
    \frac{\partial{\bf m}(\bm\theta)}{\partial\tau^{(a)}} &=  h^{(a)}_{\rm s}{\bf X}'\frac{\partial{\bm\Gamma}(\bm\tau)}{\partial\tau^{(a)}} \label{eq_partial_tau}
\end{align}
\end{theorem}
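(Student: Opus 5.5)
The statement to prove is the inner theorem (Theorem~\ref{lemma_Fisher_element}): the FIM entries and the partial derivative formula. The plan is to start from the conditional model. Given $\bm\theta=(\bm\tau,{\bf h}_{\rm s},{\bf x})$, Lemma~\ref{lemma:newform} gives ${\bf r}_{\rm s}={\bf m}(\bm\theta)+{\bf w}_{\rm s}$ with ${\bf m}(\bm\theta)={\bf X}'\bm\Gamma(\bm\tau){\bf h}_{\rm s}$ and ${\bf w}_{\rm s}\sim\mathcal{CN}(0,N_{0,\rm s}{\bf I}_N)$. Conditionally on $\bm\theta$, the observation is therefore circularly symmetric complex Gaussian. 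Its mean depends on $\bm\tau$, and its covariance $N_{0,\rm s}{\bf I}_N$ does not depend on $\bm\tau$. The log-likelihood is
\begin{align}
\log p({\bf r}_{\rm s}|\bm\theta)=-N\log(\pi N_{0,\rm s})-\frac{1}{N_{0,\rm s}}\|{\bf r}_{\rm s}-{\bf m}(\bm\theta)\|^2 .
\end{align}

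The first key step is to differentiate this twice with respect to $\tau^{(a)}$ and $\tau^{(b)}$. Write $\|{\bf r}-{\bf m}\|^2=({\bf r}-{\bf m})^{\rm H}({\bf r}-{\bf m})$. Because $\bm\tau$ is real, the first derivative is $\frac{2}{N_{0,\rm s}}\Re\{({\bf r}-{\bf m})^{\rm H}\partial_{\tau^{(a)}}{\bf m}\}$. Differentiating again gives two terms: $-\frac{2}{N_{0,\rm s}}\Re\{(\partial_{\tau^{(b)}}{\bf m})^{\rm H}\partial_{\tau^{(a)}}{\bf m}\}$, and a term linear in ${\bf r}-{\bf m}={\bf w}_{\rm s}$ multiplying $\partial^2{\bf m}$. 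Taking the expectation over ${\bf w}_{\rm s}$ given $\bm\theta$ removes the second term, since ${\bf w}_{\rm s}$ has zero mean. Changing the sign then yields the claimed FIM entry; the real part makes it symmetric in $a,b$. This is the standard Slepian--Bangs formula for complex Gaussian observations with parameter-independent covariance.

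The second step is the derivative of ${\bf m}$. ${\bf X}'$ depends only on ${\bf x}$ and ${\bf h}_{\rm s}$ does not depend on $\bm\tau$, so $\partial_{\tau^{(a)}}{\bf m}={\bf X}'\,\partial_{\tau^{(a)}}\bm\Gamma(\bm\tau)\,{\bf h}_{\rm s}$. By Lemma~\ref{lemma:newform}, the $j$-th column of $\bm\Gamma$ is $p((i-N)T_{\rm s}-\tau^{(j)})$, which depends only on $\tau^{(j)}$. Hence $\partial_{\tau^{(a)}}\bm\Gamma$ has a single nonzero column, namely column $a$, with entries $-p'((i-N)T_{\rm s}-\tau^{(a)})$. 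Multiplying by ${\bf h}_{\rm s}$ picks out $h^{(a)}_{\rm s}$, which gives \eqref{eq_partial_tau}. Here $\partial\bm\Gamma/\partial\tau^{(a)}$ is read as that single column, or equivalently the full matrix with ${\bf h}_{\rm s}$ replaced by ${\bf e}_a$.

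I expect the main obstacle to be bookkeeping rather than mathematics. One issue is handling the complex-valued derivative correctly so that the factor $2$ and the real part come out right; this can be checked by writing the problem in real coordinates. The other is keeping track of the sign from the chain rule on $p(\cdot-\tau)$ and the column-selection interpretation. Regularity of the sinc-type pulse, which is smooth, justifies exchanging differentiation and expectation.
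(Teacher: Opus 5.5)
Your proposal is correct and takes essentially the same route as the paper. The paper also conditions on $\bm\theta$ to get ${\bf r}_{\rm s}\sim\mathcal{CN}({\bf m}(\bm\theta),N_{0,\rm s}{\bf I})$. Where it cites Kay's Gaussian FIM formula, you re-derive it from the Hessian of the log-likelihood, and it then notes that only the $a$-th path depends on $\tau^{(a)}$, so ${\bf h}_{\rm s}$ collapses to $h_{\rm s}^{(a)}$. Your reading of $\partial\bm\Gamma/\partial\tau^{(a)}$ as its single nonzero column is a useful clarification of the paper's notation, and the $-p'$ sign is harmless because every FIM entry multiplies two such derivatives.
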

\proof {\color{black}Given $\bm{\tau}$, $\bf{h}_{\rm s}$, and ${\bf x}$,
the received signal} $\bf{r}_{\rm s}$ in \eqref{eq:io_relation} follows a complex Gaussian distribution $CN({\bf{m}}(\bm \theta), N_{0, \rm s}\bf{I})$. 
In~\cite{kay1993fundamentals}, it is proven that when the likelihood is a multivariate Gaussian PDF with its mean depending on parameter vector $\bm \tau$, then the $(a,b)$-th entry of the FIM $\bf{J}( \bm \theta)$ is 
\begin{align}
    [{\bf J}(\bm \theta)]_{a,b} &=2\Re{\Big(\Big(\frac{\partial \bf{m}(\bm \theta)}{\partial\tau^{(a)}}\Big)^{\!\rm H}(N_{0, \rm s}{\bm{I}})^{-1} \frac{\partial {\bf{m}}(\bm \theta)}{\partial \tau^{(b)}}\Big)}\\
    &=\frac{2}{N_{0, \rm s}} \Re{\Big( \Big(\frac{\partial \bf{m}(\bm \theta)}{\partial\tau^{(a)}}\Big)^{\!\!\rm H} \frac{\partial \bf{m}(\bm \theta)}{\partial \tau^{(b)}} \Big)}. \label{eq:diagonal_element}
\end{align}
Each partial derivative can be obtained for $a=1,2,\dots,P$ as
\begin{align}
&\frac{\partial \bf{m}(\bm\theta)}{\partial\tau^{(a)}}=\frac{\partial\ \ \ }{\partial\tau^{(a)}} ({\bf X}'\bm\Gamma(\bm\tau){\bf h}_{\rm s} )\\
=&
\begin{bmatrix}
    \sum_{n=1}^{N} x_{n} \! \sum_{i=1}^{P} \!h_{\rm s}^{(i)}  \frac{\partial\ \ \ }{\partial\tau^{\bm (a)}} {\color{black}p\left((1-n)T_{\rm s}-\tau^{(i)} \right)} \\
    \sum_{n=1}^{N} x_{n}\! \sum_{i=1}^{P} \!h_{\rm s}^{(i)}  \frac{\partial\ \ \ }{\partial\tau^{\bm (a)}} {\color{black}p\left((2-n)T_{\rm s}-\tau^{(i)} \right)} \\
    \vdots \\
    \sum_{n=1}^{N} x_{n} \! \sum_{i=1}^{P} \!h_{\rm s}^{(i)}  \frac{\partial\ \ \ }{\partial\tau^{\bm (a)}} {\color{black}p\left((N-n)T_{\rm s}-\tau^{(i)} \right)} 
\end{bmatrix}.
\end{align}
Note that only the $a$-th path survives because the partial derivatives of the other paths are all zero. {\color{black}Hence, the multiplied vector ${\bf h}_{\rm s}$ becomes a scalar coefficient $h_{\rm s}^{(a)}$, which completes the proof of~\eqref{eq_partial_tau}.}
\qedhere

The $(a,b)$-th entry of the FIM can be computed as follows by using \eqref{eq_partial_tau} from the above theorem. 
\begin{align}
&{\color{black}[{\bf J}(\bm \tau,{\bf h}_{\rm s}, {\bf x})]_{a,b}}  \\
=\ &\frac{2}{N_{0, \rm s}}(h_{\rm s}^{(a)})^{*}h_{\rm s}^{(b)}\left(\frac{\partial{\bm\Gamma}(\bm\tau)}{\partial\tau^{(a)}}\right)^{T}{\bf X}'^{\rm H}{\bf X}'\frac{\partial{\bm\Gamma}(\bm\tau)}{\partial\tau^{(b)}}.\label{eq_FIM_element}
\end{align}
The expectation over ${\bf h}_{\rm s}$ in~\eqref{eq_MCRB_} cancels out non-diagonal entries in the FIM because $\mathbb{E}_{{\bf h}_{\rm s}}[(h_{\rm s}^{(a)})^{*}h_{\rm s}^{(b)}]=\mathbbm{1}_{a=b}$. This indicates that the FIM is a diagonal matrix, and taking the inverse is significantly simplified.
The expected value of the diagonal element can be obtained in an explicit form as follows:
\begin{align}
&{\color{black} \mathbb{E}_{{\bf h}_{\rm s}}[{\bf J}(\bm \tau,{\bf h}_{\rm s}, {\bf x})]_{a,a} }=\frac{2}{N_{0, \rm s}}\left\Vert {\bf X}'\frac{\partial{\bm\Gamma}(\bm\tau)}{\partial\tau^{(a)}}\right\Vert^2\\
=&\frac{2}{N_{0, \rm s}} \sum_{m=1}^N \!\left\Vert\sum_{n=1}^N x_n \frac{\partial}{\partial \tau^{(a)}}{\color{black}p\left((m-n)T_{\rm s}-\tau^{(a)} \right)}\right\Vert^2.\\
=&\frac{2}{N_{0, \rm s}}\sum_{m=1}^N \!\left\Vert\sum_{n=1}^N x_n {\color{black}p'\left((1-n)T_{\rm s}-\tau^{(a)} \right)}\right\Vert^2,
\end{align}
{\color{black}where $p'(t) = \frac{\partial p(t)}{\partial t}$.}
The MCRB in~\eqref{eq_MCRB} can be obtained by the inverse of the FIM, which has the diagonal element equal to the multiplicative inverse of the equation above.

\subsection{Likelihood of Correlated Fading Channel}\label{appendix:likelihood_specialcase}
Recall that ${\bf{R}}_{{\bf{r}}_{\rm c}|{\bf{x}}}=N_{0, \rm c}\mathbf{I}_N+{{\bf{X}}\mathbf{R}_{\rm c}{{\bf{X}}^{\rm{H}}}}$. 
For the derivation, we break down $\mathbf{R}_{\rm c}$ into a diagonal matrix and an outer product of two vectors, i.e., $\mathbf{R}_{\rm c}=(1-\rho_{\rm{c}})\mathbf{I}_N+\rho_{\rm{c}}\mathbf{1}_N\mathbf{1}_N^{\rm{T}}$, where $\mathbf{1}_N$ is an $N$-dimensional column vector whose elements are all ones, i.e., $\mathbf{1}_N=(1, 1, \dots, 1)^{\rm T}$, and $\rho_{\rm c}=0,1$ characterizes un- and fully-correlated cases, respectively. 
Then, the conditional covariance ${\bf{R}}_{{\bf{r}}_{\rm c}|{\bf{x}}}$ can be also broken down into two parts as
\begin{align}
{\bf{R}}_{{\bf{r}}_{\rm c}|{\bf{x}}}&= N_{0, \rm c}\mathbf{I}_N +\mathbf{X}\left((1-\rho_{\rm{c}})\mathbf{I}_N+\rho_{\rm{c}}\mathbf{1}_N\mathbf{1}_N^{\rm{T}}\right)\mathbf{X}^{\rm H} \\
& = \left(N_{0, \rm c}\mathbf{I}_N+ (1-\rho_{\rm{c}})\mathbf{X}\mathbf{X}^{\rm H}\right) +\rho_{\rm{c}}\mathbf{x}\mathbf{x}^{\rm H}.\label{eq_det_proof1}
\end{align}
For simpler representation, let $\mathbf{A}$ an $N\times N$ diagonal matrix whose $i$-th diagonal element $A_{i,i}$ satisfies $A_{i,i}=N_{0, \rm c}+(1-\rho_{\rm c})|x_i|^2$, i.e., $\mathbf{A}\coloneqq N_{0, \rm c}\mathbf{I}_N+ (1-\rho_{\rm{c}})\mathbf{X}\mathbf{X}^{\rm H}$. We solve the determinant $\det({\bf{R}}_{{\bf{r}}_{\rm c}|{\bf{x}}})$ and the exponent term ${- {\bf{r}}_{\rm{c}}^{\rm{H}}{\bf{R}}_{{\bf{r}}_{\rm c}|{\bf{x}}}^{ - 1}{{\bf{r}}_{\rm{c}}}}$ in \eqref{ML_rule_prob} based on this. 

The determinant $\det({\bf{R}}_{{\bf{r}}_{\rm c}|{\bf{x}}})$ can be derived with $\mathbf{A}$ by using the matrix determinant lemma\footnote{For an invertible matrix $\mathbf{A}$ and two column vectors $\mathbf{u}$ and $\mathbf{v}$, $\det(\mathbf{A}+\mathbf{u}\mathbf{v}^{\rm H})=\det(\mathbf{A})(1+\mathbf{v}^{\rm H}\mathbf{A}^{-1}\mathbf{u})$.  } as follows. 
\begin{align}
&\det({\bf{R}}_{{\bf{r}}_{\rm c}|{\bf{x}}})\\
\hspace{-1cm}=&\Big(\!1\!+\!\!\sum_{i=1}^N 
 \frac{\rho_{\rm{c}}|x_i|^2}{N_{0, \rm c} \!+\!(1\!-\!\rho_{\rm c})|x_i|^2} \!\Big) \!\prod_{i=1}^N \!\left( N_{0, \rm c}\! +\!(1\!-\!\rho_{\rm c})|x_i|^2\right).\label{eq_det}
\end{align}
The exponent term can be solved by computing the inverse of ${\bf{R}}_{{\bf{r}}_{\rm c}|{\bf{x}}}$ by the Sherman–Morrison formula\footnote{For an invertible matrix $\mathbf{A}$ and two column vectors $\mathbf{u}$ and $\mathbf{v}$, $(\mathbf{A}+\mathbf{u}\mathbf{v}^{\rm H})^{-1}=\mathbf{A}^{-1}-\frac{\mathbf{A}^{-1}\mathbf{u}\mathbf{v}^{\rm H}\mathbf{A}^{-1}}{1 +\mathbf{v}^{\rm H}\mathbf{A}^{-1}\mathbf{u}}$} as
\begin{align}
{\bf{R}}_{{\bf{r}}_{\rm c}|{\bf{x}}}^{-1} &= (\mathbf{A} + \rho_{\rm{c}}\mathbf{x}\mathbf{x}^{\rm H} )^{-1} =\mathbf{A}^{-1} - \frac{\mathbf{A}^{-1}\rho_{\rm{c}}\mathbf{x}\mathbf{x}^{\rm H}\mathbf{A}^{-1}}{1+\rho_{\rm{c}}\mathbf{x}^{\rm H} \mathbf{A}^{-1}\mathbf{x}}\\
&=\mathbf{A}^{-1} - \frac{ \rho_{\rm{c}}\mathbf{A}^{-1}\mathbf{x}\mathbf{x}^{\rm H}\mathbf{A}^{-1}  }{1+\rho_{\rm{c}} \sum_{i=1}^N ({|x_i|^2}/{A_{i,i}}) }.
\end{align}
For simplicity, let the scaling factor be defined as $C_0 \coloneqq {\rho_{\rm c}}/\left({1+\rho_{\rm{c}} \sum_{i=1}^N ({|x_i|^2}/{A_{i,i}})}\right)$. Then, the exponent term can be further solved as 
\begin{align}
- {\bf{r}}_{\rm{c}}^{\rm{H}}{\bf{R}}_{{\bf{r}}_{\rm c}|{\bf{x}}}^{ - 1}{{\bf{r}}_{\rm{c}}}&=- {\bf{r}}_{\rm{c}}^{\rm{H}}\left( {\mathbf{A}^{-1} - C_0\mathbf{A}^{-1}\mathbf{x}\mathbf{x}^{\rm H}\mathbf{A}^{-1}}\right){{\bf{r}}_{\rm{c}}}\\
&=- {\bf{r}}_{\rm{c}}^{\rm{H}} {\bf{A}}^{ - 1}{{\bf{r}}_{\rm{c}}} +C_0 {{\bf{r}}_{\rm{c}}^{\rm H}}\mathbf{A}^{-1}\mathbf{x}\mathbf{x}^{\rm H}\mathbf{A}^{-1}{{\bf{r}}_{\rm{c}}} \\
&=-\sum_{i=1}^N\frac{|r_{c,i}|^2}{A_{i,i}} +C_0\bigg\lvert \sum_{i=1}^N \frac{r_{c,i}^{*}x_i}{A_{i,i}}\bigg\rvert^2\\
&=-\sum_{i=1}^N\frac{|r_{c,i}|^2}{N_{0, \rm c} +(1-\rho_{\rm c})|x_i|^2} \\
&\hspace{1cm}+\frac{\rho_{\rm c}\big\lvert \sum_{i=1}^N \frac{r_{c,i}^{*}x_i}{N_{0, \rm c}+(1-\rho_{\rm c})|x_i|^2}\big\rvert^2}{{1+\rho_{\rm{c}} \sum_{i=1}^N \frac{|x_i|^2}{N_{0, \rm c}+(1-\rho_{\rm c})|x_i|^2}}}. \label{eq_exp}
\end{align}
By using the closed forms of the determinant in \eqref{eq_det} and the exponent in \eqref{eq_exp}, the likelihood in \eqref{ML_rule_prob} can be also explicitly solved. Remark that this also holds for $\rho_{\rm c} \in (0,1)$. 

\bibliographystyle{IEEEtran}

\bibliography{References}

\end{document}